\newtheorem{theorem}{Theorem}
\newtheorem{lemma}{Lemma}
\title{Pseudo-Likelihood Inference}
\author{%
  Theo Gruner$^{~1,2}$
  \quad
  Boris Belousov$^{~3}$ 
  \quad
  Fabio Muratore$^{~4}$
  \\
  \textbf{Daniel Palenicek}$^{~1,2}$
  \quad
  \textbf{Jan Peters}$^{~1,2,3,5}$ \\
  $^{1}$ Intelligent Autonomous Systems Group, Technical University of Darmstadt \\
  $^{2}$ hessian.AI \quad
  $^{3}$ German Research Center for AI (DFKI) \\
  $^{4}$ Bosch Center for Artificial Intelligence \quad
  $^{5}$ Centre for Cognitive Science \\
  \texttt{theo\_sunao.gruner@tu-darmstadt.de}
}
\begin{document}

\maketitle

\begin{abstract}
  \ac{sbi} is a common name for an emerging family of approaches that infer the model parameters when the likelihood is intractable.
    Existing \acs{sbi} methods either approximate the likelihood, such as \ac{abc}, or directly model the posterior, such as \ac{snpe}.
    While \ac{abc} is efficient on low-dimensional problems, on higher-dimensional tasks, it is generally outperformed by \ac{snpe} which leverages function approximation.
    In this paper, we propose \ac{pli}, a new method that brings neural approximation into \ac{abc}, making it competitive on challenging Bayesian system identification tasks.
    By utilizing integral probability metrics, we introduce a smooth likelihood kernel with an adaptive bandwidth that is updated based on information-theoretic trust regions.
    Thanks to this formulation, our method
    (i) allows for optimizing neural posteriors via gradient descent,
    (ii) does not rely on summary statistics,
    and (iii) enables multiple observations as input.
    In comparison to \ac{snpe}, it leads to improved performance when more data is available.
    The effectiveness of \ac{pli} is evaluated on four classical \ac{sbi} benchmark tasks and on a highly dynamic physical system,
    showing particular advantages on stochastic simulations and multi-modal posterior landscapes.
\end{abstract}

\section{Introduction}

Parametric stochastic simulators are a well-established tool for predicting the behavior of real-world phenomena.
These statistical models find widespread use in various scientific fields such as physics, economics, biology, ecology, computer science, and robotics, where they help to gain knowledge about the underlying stochastic processes \citep{Hartig_2011,McGoff_2015} or generate additional data for subsequent downstream tasks \citep{Muratore_2022_review}.
In both cases, the practitioner seeks to explain the observations as accurately as possible while incorporating all available information.
The output of such a simulator is largely determined by its parameters and their values.
When estimating these parameters using Bayesian inference, given observations from a physical system, which are inevitably subject to measurement noise, we obtain a distribution over values instead of a point estimate.
Additionally, there might be several parameter configurations yielding the same observation, hence rendering the resulting distribution to be multi-modal.
Moreover, the likelihood function might be unknown or too expensive to evaluate for many practical use cases.
The combination of these difficulties makes obtaining a posterior distribution over simulator parameters challenging for state-of-the-art inference methods, both regarding effectiveness and efficiency.

\ac{sbi} approaches address the issue of intractable likelihoods by using (stochastic) simulators as forward models to generate observations from proposal distributions over parameters.
The approaches are also often called \emph{likelihood-free}, which can be easily misunderstood since some of them directly approximate the likelihood~\citep{Cranmer_2020}.
\ac{abc}
is a family of \ac{sbi} methods that approximate the posterior with a set of weighted particles which are obtained from Monte Carlo simulations and updated based on an empirical estimation of the intractable likelihood~\citep{Sisson_2018}.
For an \ac{abc} approach to work well, three criteria have to be fulfilled:
(i) the likelihood kernel is capable of measuring the similarity of observations meaningfully,
(ii) the proposal distribution samples close to the posterior,
(iii) the decision-making rule balances between accepting a sufficient amount of samples from the proposal and steering inference towards the posterior distribution.
Constructing a suitable likelihood kernel often means tailoring summary statistics to the problem at hand.  
However, recent advances promise to replace this heuristic-based process by employing \acp{ipm} to measure statistical distances in observation space~\citep{Bernton_2019,Dellaporta_2022,Drovandi_2022}.
While these methods significantly increase the required number of simulations, approximations of the statistical distances~\citep{Gretton_2012,Cuturi_2013} can be computed in parallel, hence facilitating the parallelization of the whole inference pipeline.
Following up on the shortcomings of \ac{abc}, the family of \ac{snpe} approaches provide Bayesian inference methods that leverage conditional neural density estimators to approximate the posterior~\citep{Papamakarios_2016,Greenberg_2019, Papamakarios_2019,Durkan_2020}.
The benchmarking study of \citet{Lueckmann_2021} concludes that, generally, \ac{snpe} approaches are to be preferred over \ac{abc} as they are superior in terms of expressibility and accuracy across a wide range of benchmarking tasks.
However, it is important to point out that the analysis of the posterior inference has solely been reported for single observations.
These single-sample scenarios favor \ac{snpe} in high dimensions since \ac{abc} relies on summary statistics to evaluate the likelihood.
Therefore, it remains an open question whether \ac{snpe} methods can transfer their benefits to settings where the (approximated) posterior is conditioned on multiple observations at once.

\paragraph{Contributions.}
We introduce a novel \ac{sbi} method called \acf{pli} by deriving the \ac{abc} posterior from a constrained variational inference objective, inspired by prior works on the duality between stochastic optimization and variational inference \citep{Peters_2010, Arenz_2020, Watson_2022, Hansel_2023}.
\ac{pli} updates this posterior from pseudo-likelihoods which are exponentially transformed statistical distances computed using \acp{ipm}.
To further remove heuristics from the inference process, we derive an adaptive bandwidth update of \ac{pli}'s likelihood kernel that bounds the loss of information based on information-geometric trust-region principles.
This way, \ac{pli} can update its neural posterior solely given observations from a (stochastic) black-box simulator.
Moreover, the usage of \acp{ipm} enables \ac{pli} to simultaneously condition on a variable number of observations, while \ac{snpe} methods need to concatenate them and, therefore, degrade when the number of observations increases.
We compare \ac{pli} against \ac{abc} and one \ac{snpe} method on two \ac{sbi} benchmarking tasks as well as a highly dynamical double pendulum task. 
For both, \ac{abc} and \ac{pli}, we investigate two \acp{ipm}: the \ac{mmd} and the Wasserstein distance.
Motivated by recent benchmarking results, we chose \mbox{\ac{apt}~\citep{Greenberg_2019}} to represent \ac{snpe} approaches.
Our experiments investigate the dependency of the trained density estimator's performance on the number of observations, where performance is measured in observation as well as in parameter space.
We show the merits and disadvantages of all methods and conclude with concrete recommendations.

\section{Bayesian inference with intractable likelihoods}
The objective of Bayesian inference is to find the posterior parameter distribution $p(\fxi|\refdata)$ given a set of reference data points $\refdata$ which are assumed to be drawn from the likelihood model $p(\obs|\fxi)$.
Given a prior belief over the parameters $p(\fxi)$, the posterior is expressed via Bayes' rule
\begin{equation}
    p(\fxi | \refdata) \propto p(\refdata | \fxi)~p(\fxi). \label{eq:bayes_posterior}
\end{equation}
In the following, we describe \ac{sbi} methodologies that aim to approximate the posterior~\eqref{eq:bayes_posterior} when the likelihood is given by a simulator model, from which only sampling is possible, $\data \sim p(\obs|\fxi)$ but evaluating the likelihood is infeasible.

\subsection{Approximate Bayesian computation}
\ac{abc} methods perform Bayesian inference without explicitly computing the likelihood function $p(\refdata|\fxi) = \int p(\refdata|\data, \fxi) p(\data|\fxi)~\diff\data$~\citep{Sisson_2018}.
Instead, they approximate it
\begin{equation}
    \tilde p_\bandwidth(\refdata|\fxi) \propto \int K_{\bandwidth}(D(\refdata, \data))~p(\data | \fxi) \diff\data
    \label{eq:pseudo_likelihood}
\end{equation}
using Monte Carlo samples $\data$ from the simulator as the reference points and smoothening them with a kernel $K_{\bandwidth}(D(\refdata, \data))$ \citep{Karabatsos_2018}.
The kernel assesses the similarity of the reference data $\refdata$ and the simulated data $\data$ based on a distance measure $D$, the kernel type $K$, and the kernel bandwidth $\bandwidth$.
The uniform kernel $\mathds{1}_{\{D(\refdata, \data)\leq\bandwidth\}}$ (see Table~\ref{tab:kernels}) has emerged as the default kernel choice of many \ac{abc} methods~\citep{Tavare_1997, DelMoral_2012, Lee_2012, Lenormand_2013}.
In this case, the bandwidth represents a rejection threshold that assigns zero probability to all parameters whose simulations lie outside of the $\bandwidth$-ball in terms of the distance $D$.
The uniform kernel exhibits the favorable characteristic of converging to the likelihood in the limit~\citep{Karabatsos_2018}
\begin{equation}
    \textstyle \lim_{\bandwidth \rightarrow 0} \tilde p_\bandwidth(\refdata|\fxi) = \int \mathds{1}_{\{\refdata\}}(\data)p(\data|\fxi)\diff\data = p(\refdata|\fxi).\label{eq:true_posterior}
\end{equation}
Once the approximate likelihood~\eqref{eq:pseudo_likelihood} is obtained, \ac{abc} draws samples from the approximate posterior
\begin{equation}
    \tilde p_{\bandwidth}(\fxi | \refdata) \propto \tilde p_\bandwidth(\refdata|\fxi)~p(\fxi).
    \label{eq:posterior}
\end{equation}
There exist multiple ways of implementing this sampling procedure.
In rejection \ac{abc}~\citep{Tavare_1997}, the simplest form, proposal parameters are drawn from the prior distribution $p(\fxi)$ and are accepted if the simulated data falls close to the true data, as measured by the kernel function.
While rejection \ac{abc} yields a simple algorithm with desirable convergence properties, finding posterior samples for small bandwidths $\bandwidth$ in high dimensions often becomes computationally infeasible~\citep{Marjoram_2003}.
Therefore, the research on \ac{abc} focuses on three directions of improvement: (i) replacing the prior with a sequentially updated proposal distribution $\pi_t(\fxi)$ to reduce the search space during sampling, (ii) adapting the bandwidth $\bandwidth$ to draw samples with an appropriate acceptance rate, and (iii) finding sufficient statistics to represent the simulated output in low dimensions \citep{Sisson_2018}.
\acs{mcmc}-\ac{abc}~\citep{Marjoram_2003} and \acs{smc}-\ac{abc}~\citep{Sisson_2007, Toni_2009, DelMoral_2012, Lenormand_2013} build upon sampling strategies based on \ac{mcmc} and \ac{smc} to sequentially update the proposal distribution. 
\ac{mcmc}-\ac{abc} does not allow for an adaptive bandwidth, and thus, \ac{smc} sampling strategies have evolved as the leading \ac{abc} methods for these cases~\citep{DelMoral_2012}.

\subsection{Sequential Monte Carlo ABC}
\ac{smc}--\ac{abc} builds on \ac{smc} samplers introduced by \citet{DelMoral_2006}.
Fundamentally, \ac{smc}-\ac{abc} approximates the posterior distribution through a sequence of intermediate \emph{target posterior} distributions $\tilde p_{\bandwidth_t}(\fxi|\refdata)$ \eqref{eq:posterior} that are characterized by an adaptable bandwidth parameter $\bandwidth_t$, where $t$ denotes the inference time. Furthermore, \ac{smc}-\ac{abc} uses importance sampling from a sequentially updated proposal distribution $\pi_{t}(\fxi)$ to improve the sample efficiency.
The proposal distribution is represented by an empircal distribution $\pi_t(\fxi) = 1/M\sum_{i=0}^{M} \delta_{\fxi_t^{(i)}}(\fxi)$ that is defined through a set of particles $\{\fxi_t^{(i)}\}$. Importance sampling then enables the approximation of the target posterior $\tilde p_{\bandwidth_t}(\fxi|\refdata)$ from the proposal distribution
\begin{equation}
    \tilde p_{\bandwidth_t}(\fxi|\refdata) \approx q_t(\fxi) = \sum_{i=1}^{M} W_t^{(i)} \delta_{\fxi_t^{(i)}}(\fxi); \quad W_t^{(i)} = \frac{\tilde p_{\bandwidth_t}(\fxi_t^{(i)}|\refdata)}{\pi_t(\fxi^{(i)})}.\label{eq:importance_weights}
\end{equation}
Here, $W_t$ are the weights between the target posterior and the proposal distribution. 
\ac{smc}-\ac{abc} methods follow three steps to carry out inference for the next target posterior $\tilde p_{\bandwidth_{t+1}}(\fxi|\refdata)$: (i) A new bandwidth $\bandwidth_{t+1}$ of the target posterior $\tilde p_{\bandwidth_{t+1}}(\refdata|\fxi)$ is estimated. Typically, the update is based on heuristics, such as the \ac{ess}~\citep{DelMoral_2012} to ensure that the particle variance does not degrade. (ii) New proposal particles $\fxi_t^{(i)}$ are sampled from a forward Markov kernel $\fxi_{t+1}^{(i)}\sim K_t(\fxi_{t-1}^{(i)}, \fxi_{t}^{(i)})$ to stay close to the target posterior of the next iteration $\tilde p_{\bandwidth_{t+1}}(\fxi|\refdata)$. (iii) The weights of the particles are adjusted based on approximations of~\eqref{eq:importance_weights}. As the weight update is typically numerically intractable~\citep{DelMoral_2006}, different \ac{smc}-\ac{abc} methods~\citep{DelMoral_2012, Lee_2012, Lenormand_2013} have been introduced which propose approximations to the optimal weight update. We refer to Appendix~\ref{app:smc_abc} for a more detailed explanation of \ac{smc}-\ac{abc} and its different approaches.
\section{Pseudo-likelihood inference}\label{sec:pli}

\begin{figure*}
    \centering
    \includegraphics[width=\textwidth]{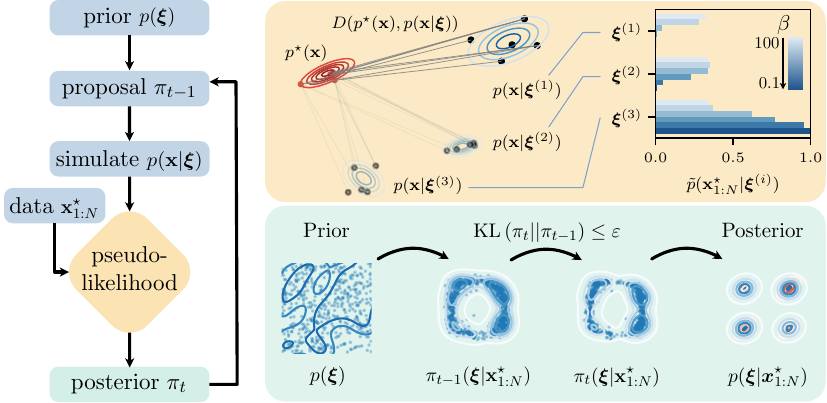}
    \caption{Schematic overview of the introduced iterative \acf{pli} approach. 
    (top)~Based on samples drawn from the simulator, the pseudo-likelihood \eqref{eq:pseudolikelihood} is evaluated based on the discrepancy between the empirical data-generating and likelihood distributions. 
    The bar chart shows how the pseudo-likelihood evaluation changes for different bandwidths. 
    (bottom)~The evaluation of the pseudo-likelihood is used to estimate a target posterior $\pi_t$ under trust-region constraints that moves from the prior distribution to the final posterior.}
    \label{fig:pli_description}
\end{figure*}

The proposed \ac{pli} methodology, summarized in Figure~\ref{fig:pli_description}, generalizes the \ac{abc} approaches by introducing exponential likelihood kernels with adaptive bandwidth updates, which are motivated from a \ac{vi} perspective.
\subsection{Exponential likelihood kernels}\label{sec:pli_bw_from_tr}
\ac{pli} adopts the view of \ac{smc}-\ac{abc} on approximating a smoothed target posterior $p_t(\fxi|\refdata)$, in the following denoted by $\pi_t(\fxi)$, by formulating the following constrained \ac{vi} problem for each inference step $t$,
\begin{equation}
\begin{aligned}
    \footnotesize \pi_t(\fxi) = \underset{\pi(\fxi) \in \mathcal{P}(\fxi)}{\mathrm{argmin}}\; & \KL{\pi(\fxi)}{p(\fxi|\refdata)}, \\
    \footnotesize \st\; &\KL{\pi(\fxi)}{\pi_{t-1}(\fxi)} \leq \varepsilon.
    \label{eq:reps_problem}
\end{aligned}
\end{equation}

The optimization is balanced between fitting the posterior distribution $p(\fxi|\refdata)$ and constraining the information loss between two inference steps $\pi_{t-1}(\fxi)$ and $\pi_t(\fxi)$.
The loss of information is incorporated as a trust-region constraint with the bound $\varepsilon > 0$ in the space of probability distributions $\mathcal{P}(\fxi)$ through the \ac{kl} divergence $\KL{\pi(\fxi)}{\pi_{t-1}(\fxi)}$.

\begin{theorem}
The optimal target distribution $\pi_t(\fxi)$ in the optimization problem~\eqref{eq:reps_problem} is given by
\begin{equation}
    \pi_{t}(\fxi) \propto \left(\frac{p(\fxi)}{\pi_{t-1}(\fxi)}\right)^{\frac{1}{1 + \eta_t}}p(\refdata|\fxi)^{\frac{1}{1+\eta_t}}~\pi_{t-1}(\fxi)\label{eq:pli_posterior}
\end{equation}
where $\eta_t > 0$ is a dual Lagrangian variable corresponding to the trust-region constraint.
\end{theorem}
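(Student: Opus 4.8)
The plan is to treat the constrained program~\eqref{eq:reps_problem} as a convex optimization over the density $\pi$ and characterize its solution via Lagrangian duality, in the spirit of the stochastic-search/variational-inference correspondence cited above. First I would verify that the problem is convex: the objective $\KL{\pi(\fxi)}{p(\fxi|\refdata)}$ is convex in $\pi$, the trust-region constraint $\KL{\pi(\fxi)}{\pi_{t-1}(\fxi)} \leq \varepsilon$ defines a convex sublevel set, and the normalization constraint $\int \pi(\fxi)\,\diff\fxi = 1$ is linear. Because $\pi = \pi_{t-1}$ is strictly feasible (it attains a \ac{kl} value of $0 < \varepsilon$), Slater's condition holds, strong duality applies, and the optimum is pinned down by the stationarity (KKT) conditions.

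Next I would form the Lagrangian
\begin{equation}
    L(\pi, \eta, \lambda) = \KL{\pi(\fxi)}{p(\fxi|\refdata)} + \eta\left(\KL{\pi(\fxi)}{\pi_{t-1}(\fxi)} - \varepsilon\right) + \lambda\left(\textstyle\int \pi(\fxi)\,\diff\fxi - 1\right),
\end{equation}
with $\eta \geq 0$ the multiplier for the inequality and $\lambda$ the multiplier enforcing normalization. Differentiating with respect to $\pi(\fxi)$ pointwise and setting the result to zero gives a condition of the form $\log\pi(\fxi) - \log p(\fxi|\refdata) + \eta\bigl(\log\pi(\fxi) - \log\pi_{t-1}(\fxi)\bigr) + \text{const} = 0$. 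Solving for $\pi$ and collecting the $\log\pi$ terms with coefficient $1+\eta$ yields the geometric interpolation
\begin{equation}
    \pi_t(\fxi) \propto p(\fxi|\refdata)^{\frac{1}{1+\eta_t}}\,\pi_{t-1}(\fxi)^{\frac{\eta_t}{1+\eta_t}},
\end{equation}
where $\eta_t$ denotes the optimal dual variable.

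Finally I would recover the stated form~\eqref{eq:pli_posterior} by purely algebraic rearrangement: substituting Bayes' rule $p(\fxi|\refdata) \propto p(\refdata|\fxi)\,p(\fxi)$ from~\eqref{eq:bayes_posterior}, and rewriting the exponent via $\frac{\eta_t}{1+\eta_t} = 1 - \frac{1}{1+\eta_t}$ so that $\pi_{t-1}(\fxi)^{\eta_t/(1+\eta_t)} = \pi_{t-1}(\fxi)\,\pi_{t-1}(\fxi)^{-1/(1+\eta_t)}$. Grouping all factors raised to $\tfrac{1}{1+\eta_t}$ then reproduces $\left(p(\fxi)/\pi_{t-1}(\fxi)\right)^{1/(1+\eta_t)}p(\refdata|\fxi)^{1/(1+\eta_t)}\,\pi_{t-1}(\fxi)$. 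The main obstacle is not this algebra but justifying that $\eta_t$ is strictly positive and well-defined: I would argue that the trust region must be active at the optimum, since otherwise the unconstrained minimizer $p(\fxi|\refdata)$ would itself be feasible and $\varepsilon$ would impose no restriction; complementary slackness then forces $\eta_t > 0$, with $\eta_t$ uniquely determined as the maximizer of the concave dual function, equivalently by the binding condition $\KL{\pi_t(\fxi)}{\pi_{t-1}(\fxi)} = \varepsilon$.
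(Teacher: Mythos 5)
Your proof is correct and takes essentially the same route as the paper's: form the Lagrangian with multipliers for the trust-region and normalization constraints, set the pointwise functional derivative with respect to $\pi$ to zero, and solve for $\pi_t$, with Bayes' rule splitting $p(\fxi|\refdata)$ into $p(\refdata|\fxi)\,p(\fxi)$ --- the paper applies this decomposition before differentiating while you apply it after, but the algebra is identical. Your added justifications (convexity and Slater's condition for strong duality, and the complementary-slackness argument for why $\eta_t$ is positive and the constraint binds) go slightly beyond the paper's proof, which asserts these facts without argument.
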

\begin{proof}
See Appendix~\ref{app:reps_optimal_dist}.
\end{proof}
The temperature parameter $\eta_t$ plays the role of an adaptive step size that controls the update step from $\pi_{t-1}(\fxi)$ to $\pi_{t}(\fxi)$.
In the limit of small step sizes $\eta_t \to 0$ at convergence, the target posterior~\eqref{eq:pli_posterior} turns into the true posterior $\pi_{t}(\fxi) \to p(\fxi) p(\refdata|\fxi)$.
In the spirit of \ac{abc}-based methods, we approximate the intractable likelihood $p(\refdata|\fxi)$ with a Gibbs distribution $\tilde{p}(\refdata|\fxi)$, which we call \emph{pseudo-likelihood}, and which is based on a discrepancy measure between the empirical data distribution $p^{\star}(\obs) =  1/N\sum_{i}\delta_{\obs^\star_i}(\obs)$ and the simulator likelihood $p(\obs|\fxi)$
\begin{equation}
    \textcolor{plicolor}{\tilde{p}(\refdata|\fxi)} := \frac{1}{Z(\fxi)} \exp\left(-\frac{\D{p^{\star}(\obs)}{p(\obs|\fxi)}}{2\bandwidth}\right).
    \label{eq:pseudolikelihood}
\end{equation}
Here, $Z(\fxi)$ is the normalization constant, and $\beta > 0$ is a bandwidth parameter that controls the sharpness of the approximation.
When the KL divergence is used as the discrepancy measure $D$, we recover the true likelihood, as the following lemma states.
\begin{lemma}
\label{lemma:d=kl}
    When $D = \mathrm{KL}$ and $2\beta = 1/N$ in \eqref{eq:pseudolikelihood}, the pseudo-likelihood $\tilde{p}(\refdata|\fxi)$ equals the true likelihood $p(\refdata|\fxi)$.
\end{lemma}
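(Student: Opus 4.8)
The plan is to exploit the fact that, for the empirical reference measure $p^{\star}(\obs) = \frac{1}{N}\sum_i \delta_{\obs^\star_i}(\obs)$, the \ac{kl} divergence against the simulator likelihood collapses to an average log-likelihood plus a term that is constant in $\fxi$. First I would expand the divergence directly from its definition, using the sifting property of the Dirac deltas to turn the integral into a finite sum,
\begin{equation}
    \KL{p^\star(\obs)}{p(\obs|\fxi)} = \int p^\star(\obs)\log\frac{p^\star(\obs)}{p(\obs|\fxi)}\,\diff\obs = \frac{1}{N}\sum_{i=1}^{N}\log p^\star(\obs^\star_i) \;-\; \frac{1}{N}\sum_{i=1}^{N}\log p(\obs^\star_i|\fxi).
\end{equation}
The first, negative-entropy term depends only on the reference data and not on $\fxi$; I will denote it $c(\refdata)$ and carry it along as a constant.

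Next I would substitute the prescribed bandwidth $2\bandwidth = 1/N$ into the pseudo-likelihood \eqref{eq:pseudolikelihood}, so that with $D=\mathrm{KL}$ the exponent $-\D{p^\star(\obs)}{p(\obs|\fxi)}/(2\bandwidth)$ becomes $-N\,\KL{p^\star(\obs)}{p(\obs|\fxi)}$. The factor $N$ exactly cancels the $1/N$ averaging in the cross-entropy term, and invoking the assumption that the reference points $\refdata$ are drawn \iid from the likelihood model lets the product factorize into the joint likelihood, giving
\begin{equation}
    \exp\!\left(-N\,\KL{p^\star(\obs)}{p(\obs|\fxi)}\right) = e^{-N c(\refdata)}\prod_{i=1}^{N} p(\obs^\star_i|\fxi) = e^{-N c(\refdata)}\,p(\refdata|\fxi).
\end{equation}

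Finally I would absorb the leftover factor $e^{-N c(\refdata)}$ into the normalization constant $Z(\fxi)$. Since this factor carries no dependence on $\fxi$, and since both $\tilde p(\refdata|\fxi)$ and $p(\refdata|\fxi)$ are normalized densities in $\refdata$, the normalization forces $Z(\fxi)=e^{-N c(\refdata)}$, yielding $\tilde p(\refdata|\fxi) = p(\refdata|\fxi)$ as claimed. The main obstacle I anticipate is making the negative-entropy term $c(\refdata)=\frac{1}{N}\sum_i \log p^\star(\obs^\star_i)$ rigorous, because evaluating the empirical density at its own atoms is formally divergent on a continuous observation space. The clean way around this is to note that this term never depends on $\fxi$ and, under the standard formal treatment, takes the same value for any data realization, so it only ever contributes a multiplicative constant that is swallowed by $Z(\fxi)$; the identity should therefore be read as equality up to the $\fxi$-independent normalization that defines a valid likelihood, which is precisely what the downstream posterior \eqref{eq:pli_posterior} requires.
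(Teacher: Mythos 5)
Your proof is correct and takes essentially the same route as the paper's: both decompose $\KL{p^\star(\obs)}{p(\obs|\fxi)}$ into a $\fxi$-independent negative-entropy term plus the average negative log-likelihood $-N^{-1}\log p(\refdata|\fxi)$, use the bandwidth choice $2\bandwidth = 1/N$ to cancel the $1/N$ factor in the exponent, and absorb the remaining data-dependent constant into the normalization so that $Z(\fxi)$ is constant. Your explicit acknowledgment that the empirical-entropy term is formally divergent on a continuous observation space, and that the identity must be read under the standard formal treatment, is a precision the paper's one-line proof glosses over but does not change the argument.
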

\begin{proof}
    Since $\mathrm{KL}(p^*(\obs) \,\|\, p(\obs|\fxi)) = -\mathrm{H}[p^*(x)] - N^{-1}\log p(\refdata|\fxi)$, then provided $2\beta = 1/N$, the exponential in \eqref{eq:pseudolikelihood} is given by $\exp(-N\mathrm{KL}(p^*(\obs) \,\|\, p(\obs|\fxi))) \propto p(\refdata|\fxi)$ and $Z(\fxi)$ is a constant.
\end{proof}
However, the KL divergence is intractable in SBI because we cannot evaluate the likelihood. Therefore, we propose replacing the KL divergence with Integral Probability Metrics (IPMs), such as the \ac{mmd} and the Wasserstein distance, which can be evaluated on distribution samples. 
Although theoretical analysis is less straightforward in these cases, some results have been obtained in prior works.
Consistency and robustness of an MMD-based posterior estimator were shown by~\citet{Abdellatif_2019} and Wasserstein-based exponential kernels were studied by \citet{de2020wasserstein}.
In this paper, we focus on a practical instantiation of pseudo-likelihood inference, which can accommodate a variety of divergence functions and obtain superior empirical results by leveraging neural posterior approximators and adaptive step-size updates.
The following subsections introduce the key components that constitute our method.

\subsection{Bandwidth adaptation from trust-region principles}
The Lagrangian parameter $\eta_t$ has a particularly interesting property. From \eqref{eq:pli_posterior}, we see that pulling $\eta_t$ into the pseudo-likelihood \eqref{eq:pseudo_likelihood}, yields a \emph{time-dependent tempered pseudo-likelihood}
\begin{equation}
    \tilde p_t(\refdata| \fxi) := Z(\fxi)^{1 + \eta_t} \exp(-(2 \bandwidth_t)^{-1} D(p^\star(\obs), p(\obs| \fxi)) = \textcolor{plicolor}{\tilde p(\refdata|\fxi)}^{\frac{1}{1 + \eta_t}}.\label{eq:tempered_pseudo_likelihood}
\end{equation}
Here, we introduce the adaptive bandwidth $\bandwidth_t = (1 + \eta_t) \bandwidth$ that approaches $\beta$ in the limit $\eta_t \rightarrow 0$.
The dual formulation of the stochastic search problem~\eqref{eq:reps_problem} leads to a tractable solution for the optimal Lagrangian parameter $\eta_t$, and hence an optimal bandwidth $\bandwidth_t$ (see Appendix~\ref{app:reps_optimal_dist} for more details).
\begin{equation}
    \textstyle g(\eta_t) = -\eta_t\epsilon - (1 + \eta_t)\log\E_{\pi_{t-1}(\fxi)} \left[\left(\frac{p(\fxi)}{\pi_{t-1}(\fxi)}\right)^{\frac{1}{1 + \eta_t}}~\tilde p_t(\refdata|\fxi)\right].
    \label{eq:pli_dual}
\end{equation}
\begin{wrapfigure}{r}{0.35\textwidth}
    \centering
    \vspace{-1.5em}
    \includegraphics[width=\linewidth]{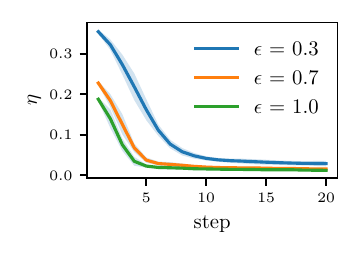}
    \vspace{-1.5em}
    \caption{Bandwidth $\eta_t$ recorded for different $\varepsilon$ on the Gaussian location task. The bandwidth is monotonically decreasing over iterations.}
    \vspace{-1em}
    \label{fig:bandwidth_update}
\end{wrapfigure}
While we obtain the primal optimal point in closed form~\eqref{eq:pli_posterior} to obtain the optimal dual variable $\eta_t$, we need to resort to numerical optimization of the Lagrangian dual objective.
The optimal bandwidth parameter $\bandwidth_t$, that is obtained by maximizing~\eqref{eq:pli_dual}, can be seen as an information-bounded trust region update to move the pseudo-likelihood towards the likelihood.
In the early inference stages, the proposal prior $p_{t-1}(\fxi)$ is typically uninformative, and thus the information loss is moderate even if $p_t(\fxi)$ moves far away from $p_{t-1}(\fxi)$.
In the later inference steps, the proposal distribution is typically pronounced, and small deviations may lead to significant information loss.
This intuition suggests that the bandwidth~$\bandwidth_t$ should decay over iterations, and indeed Figure~\ref{fig:bandwidth_update} shows that~$\bandwidth_t$ quickly decays towards zero over a range of values of $\varepsilon$ on the Gaussian location task (Sec. \ref{sec:gaussianlocation}).
The exact decay schedule of $\bandwidth_t$ is problem-dependent. Therefore, it is convenient to set an information-loss bound $\varepsilon$ and obtain an adaptive bandwidth schedule by optimizing the dual~\eqref{eq:pli_dual} rather than pre-specifying a decay schedule by hand for each problem.

\subsection{Approximate Bayesian inference with pseudo-likelihoods}\label{sec:pli_unifying_framework}
\acf{pli} is a sequential \ac{sbi} methodology based on approximating the target posterior $\pi_{t}(\fxi)$ \eqref{eq:pli_posterior}.
It is closely tied to \ac{smc}-\ac{abc} by (i) sequentially approximating the target posteriors,
(ii) sequentially adapting the bandwidth parameter, and (iii) sequentially updating the proposal distribution for higher sample efficiency.
Instead of representing the posterior through a set of weighted particles, the \ac{pli} formulation allows for various powerful neural density estimators.

A parameterized density model $q_{\systemparam}(\fxi)$ is trained to approximate the \ac{pli} posterior \eqref{eq:pli_posterior} using the m-projection $\min_{\systemparam \in \Phi} \mathrm{KL}(\pi_{t}(\fxi) \,\|\, q_{\systemparam}(\fxi))$, which results in the Weighted Maximum Likelihood (WML) objective with parameter samples drawn from the proposal prior $\fxi^{(k)} \sim \pi_{t-1}(\fxi^{(k)})$
\begin{equation}
    \textstyle \max_{\systemparam \in \Phi} \; \sum_{k=1}^K w^{(k)} \log q_{\systemparam}(\fxi^{(k)}); \quad w^{(k)} = \left(\frac{p(\fxi^{(k)})}{\pi_{t-1}(\fxi^{(k)})}\right)^{\frac{1}{1 + \eta_t}}~\tilde p_t(\refdata|\fxi^{(k)}). \label{eq:pli_loss}
\end{equation}
Thus, we derive a practical \ac{pli} algorithm by leveraging this empirical WML objective. 
Further details on the objective derivation are described in Appendix~\ref{app:reps_mprojection}.

\begin{wrapfigure}{r}{0.48\textwidth}
\begin{minipage}{\linewidth}
\vspace{-1.3em}
\begin{algorithm}[H]
    \caption{\acf{pli}}
    \label{alg:pli}
    \begin{algorithmic}[1]
    \STATE {\bfseries input:} reference data $\refdata$, prior $p(\fxi)$, stochastic simulator $p(\obs | \fxi)$, \ac{ipm} $D(\cdot, \cdot)$, posterior approximator $q_{\systemparam}(\fxi)$, max. iter. $T$
    \STATE initialize proposal prior $\pi_{0}(\fxi) = p(\fxi)$
    \FOR{$t$ in $1\!:\!T$}
        \STATE sample parameters $\fxi^{(1:K)} \sim \pi_{t-1}(\fxi)$
        \FOR{each $\fxi^{(k)}$}
            \STATE simulate data $\data^{(k)} \sim p(\obs|\fxi^{(k)})$
            \STATE compute \ac{ipm} $s^{(k)} = D(\data^{(k)}, \refdata)$
        \ENDFOR
        \STATE update $\eta_t$ by maximizing the dual~\eqref{eq:pli_dual}
        \STATE evaluate $\tilde p_t(\refdata|\fxi^{(k)})$ \eqref{eq:tempered_pseudo_likelihood}
        \STATE fit $q_{\systemparam_t}(\fxi)$ by WML \eqref{eq:pli_loss}
        \STATE set new proposal prior $\pi_t(\fxi) = q_{\systemparam_t}(\fxi)$
    \ENDFOR
    \STATE {\bfseries output:} approximate posterior $q_{\systemparam_T}(\fxi)$
    \end{algorithmic}
\end{algorithm}
\end{minipage}
\vspace{-1em}
\end{wrapfigure}

Our proposed \ac{pli} Algorithm~\ref{alg:pli} consists of four main steps.
First, in lines~4--8, training pairs from the proposal and simulator are drawn, and the discrepancy measure $D(\refdata, \data)$ between the observations and the simulations is evaluated for each $\fxi^{(k)}$. We follow \citet{Gretton_2012} to approximate the \ac{mmd} between two discrete probability measures, whereas we make use of the entropy regularized optimal transport formulation to approximate the Wasserstein distance~\citep{Cuturi_2013} (see Appendix~\ref{app:ipm}). Both versions facilitate parallelization on the GPU.
Second, in line~9, the optimal bandwidth under trust region constraint is estimated, and the tempered pseudo-likelihood is evaluated. by maximizing the dual~\eqref{eq:pli_dual}.
Third, in line~11, the parameterized density estimator~$q_{\systemparam}(\fxi)$ is trained to approximate the target posterior $\pi_t(\fxi)$ via the m-projection~\eqref{eq:pli_loss}.
Note that the expectation w.r.t. the proposal distribution $\pi_{t-1}(\fxi)$ enables gradient descent on the $q_{\systemparam_t}$ estimator without requiring a differentiable simulator.
Fourth, in line~12, we set the current posterior approximation $q_{\systemparam_t}(\fxi)$ as the proposal $\pi_t(\fxi)$ for the next inference step, thus leveraging bootstrapping of the density estimator.
While we restrict the analysis in this paper to the m-projection, we note that the i-projection can also be employed, as shown in Appendix~\ref{app:reps_iprojection}.

\paragraph{Normalization.}
The normalization term $Z(\fxi)$ in the definition of the pseudo-likelihood~\eqref{eq:pseudolikelihood} requires taking an integral over the reference data $\refdata$, which is infeasible in practice.
When the KL divergence is used in the kernel, $Z(\fxi)$ does not depend on $\fxi$, as shown in Lemma~\ref{lemma:d=kl}.
While in general, the dependence on $\fxi$ cannot be neglected, its influence on the weights in \eqref{eq:pli_dual} and \eqref{eq:pli_loss} may be negligible, provided the relative ranking of the samples is not affected significantly. In Appendix~\ref{app:partition_function}, we provide an ablation study on low-dimensional problems where the integral over $\refdata$ can be approximated by sampling. We observed that, even though the ranking correlation of the weights $w^{(k)}$ in \eqref{eq:pli_loss} is different with and without estimating $Z(\fxi)$, the final posterior is not affected.
Therefore, in the subsequent experiments, we treat it as a constant, as in the ideal case of the KL divergence.
Nevertheless, this is a point where our practical implementation does not follow the theoretical derivation strictly, and this issue should be addressed in future work.

\section{Experiments}\label{sec:experiments}
We compare the \ac{pli} framework against \ac{smc}-\ac{abc}~\citep{DelMoral_2012} and \ac{apt}~\citep{Greenberg_2019} on five diverse tasks.
Our implementation is based on Wasserstein-\ac{abc}~\citep{Bernton_2019}, but instead of the employed r-hit kernel~\citep{Lee_2012}, our implementation is based on population Monte Carlo (Alogrithm~\ref{alg:pmc_abc} \citep{Lenormand_2013}) because we observed improved performance in preliminary studies. A summary of the different \ac{abc} methods is given in Appendix~\ref{app:smc_abc} and Table~\ref{tab:kernels}.
\ac{apt} was chosen as the representative for the class of \ac{snpe} algorithms. 
We leverage \acp{nsf}~\citep{Durkan_2020} as density estimators for both \ac{pli} and \ac{apt}. 
Both neural flow configurations share the same base network architecture, but for \ac{apt}, the conditional flow is augmented with an embedding network (Appendix~\ref{appendix:experimental_details}).
All experiments are implemented in JAX~\citep{JAX_github}, and each ran on a single Nvidia RTX 3090.\footnote{\url{https://github.com/theogruner/pseudo_likelihood_inference}}
To make the experiments comparable, the simulation budgets of \ac{pli} and \ac{apt} were fixed to 5000 samples per inference step over 20 episodes, while \ac{abc} ran for 200 episodes on 1000 particles.

\subsection{Evaluation metrics}
The model is compared against the reference posterior samples $\fxi^\star$ when available. 
We also quantify the methods' performances based on their realizations by computing the Wasserstein distance and the \ac{mmd}.
The comparison is carried out on \num{10000} samples each.
Furthermore, we use \acp{ppc} to evaluate the predictive capabilities of the posterior models in the observation space $\E_{q(\fxi|\refdata)}\left[D(\refdata, \data)\right]$.
Due to the computational limits, the \acp{ppc} are carried out on \num{1000} simulations against the reference data.
\citet{Lueckmann_2021} also report results with classifier-based tests and the kernelized Stein-discrepancy.
However, since benchmarking is not our focus, we restrict the analysis to comparing with the Wasserstein and \ac{mmd}.

\begin{figure*}[!t]
    \centering
    \includegraphics[width=\textwidth]{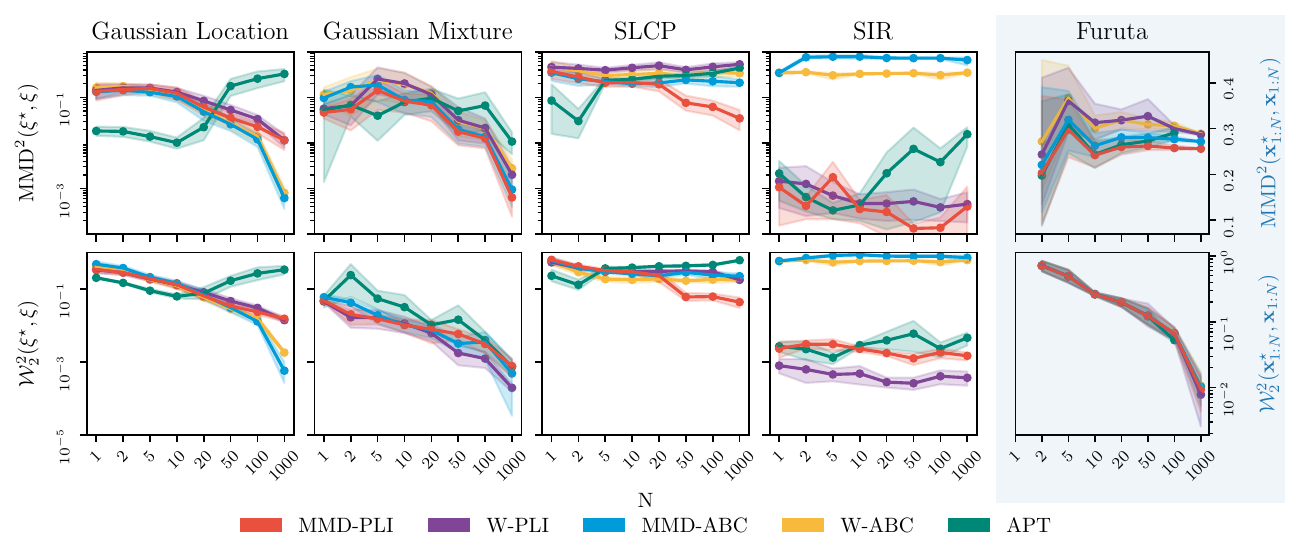}
    \caption{Evaluation of the posterior performance on five different tasks.
    We report the mean and 95\% ci over 10 random seeds, each carried out using $N$ data points for conditioning.
    We compare samples from the approximate posterior $\fxi\sim q(\fxi|\refdata)$ against reference posterior samples using \ac{mmd} and the Wasserstein distance.
    No posterior samples were available for the Furuta pendulum. Therefore, the performance is evaluated in the observation space.
    Lower values are better for all metrics.
    PLI is the preferred method for conditioning on multiple observations due to its steady improvement with increasing $N$. ABC performs better than PLI on Gaussian Location and Gaussian Mixture tasks but lags in more complex tasks. APT excels with few observations but degrades as $N$ increases.
    }
    \label{fig:experiments_metrics_parameter}
    \vspace{-1em}
\end{figure*}

\subsection{Tasks}\label{sec:experiments_tasks}
We evaluate \ac{pli} on four common benchmarking tasks within the \ac{sbi} community~\citep{Lueckmann_2021}: Gaussian Location, a Gaussian Mixture Model, Simple-Likelihood Complex-Posterior, and SIR. Further, we add a system identification task on a Furuta pendulum representing a highly dynamic continuous control system.
The tasks' specifications are listed in Appendix~\ref{appendix:experimental_details}.
For each task, we conduct experiments for different numbers of available reference observations $N = \{1,2,5,10,20,50,100, 1000\}$. The reference observations are simulated based on a pre-defined ground-truth parameter $\fxi^{\mathrm{gt}}$.
Although \ac{pli} and \ac{abc} can cope with varying numbers of observations $N$ and numbers of simulations per parameter $M$, we choose $N=M$ for all experiments since it is required by \ac{apt}.
In the following paragraphs, we first discuss the results of the benchmarking tasks and present a separate discussion for the Furuta pendulum. Figure~\ref{fig:experiments_metrics_parameter} gives a quantitative overview of the benchmarking tasks compared to the reference posteriors, while Figure~\ref{fig:experiments_all_metrics} in the Appendix complements the study by showing the posterior predictive performances.

\paragraph{Benchmarking tasks.}
Each benchmarking task presents different challenges that must be addressed by the \ac{sbi} methods. In Gaussian Location, the task is to infer a uni-modal 10-dimensional Gaussian distribution. Gaussian Mixture Model and SLCP feature multi-modal posteriors that require flexible density estimators. SIR is a well-known epidemiological model that features 10-dimensional data of the dynamical system. All methods generally depict a reoccurring behavior on the different benchmarking tasks, as shown in Figure~\ref{fig:experiments_metrics_parameter}. For fewer observations ($N \lessapprox 20$), \ac{apt} matches the reference posterior better than the other approaches, whereas \ac{abc} and \ac{pli} match the posterior data better with increasing $N$. In particular, \ac{pli} consistently improves with an increasing number of reference samples. The influence of $N$ on the shape of the posterior is further visualized in Figure~\ref{fig:gaussian_location_posterior}, which compares the posterior approximations of all methods for $N=2$ and $N=100$ reference observations.
For the SLCP task, \ac{abc} struggles to capture the multi-modality of the SLCP task. This effect is further illustrated when comparing Figures~\ref{fig:slcp_posteriors_npp_2}~and~\ref{fig:slcp_posteriors_npp_100}, which allow for a qualitative assessment of the posterior approximations.
On SIR, \ac{pli} variants show significantly improved performance compared to the other baselines.
Generally, we find that \ac{abc} and \ac{pli} perform better with \ac{mmd} than with Wasserstein distance. 
This observation can be attributed to the Wasserstein distance not scaling well to high dimensions, which has been reported by recent studies~\citep{Drovandi_2022, Dellaporta_2022}.

\paragraph{Furuta pendulum.}
The Furuta pendulum is an inverted double pendulum setup~\citep{Furuta_1992}.
While the system's dynamics are inherently deterministic, small perturbations of the initial state around its unstable equilibrium point lead to highly diverse trajectories.
The observation space is $T \times 6$ dimensional, where $T$ is the number of time steps per trajectory. We set the sampling frequency of the simulation to \qty{100}{\Hz} and the duration to \qty{1}{\sec}, resulting in 600-dimensional observations.
No reference posterior is available for this task; thus, the analysis is restricted to quantifying the observed data.
Given the similarity of the Wasserstein distance and the \ac{mmd} in parameter and observation space on the previous tasks, we argue that a comparison based on \acp{ppc}, i.e., $\mathcal{W}_2^2(\refdata,\data)$ and $\mathrm{MMD}^2(\refdata,\data)$, is sufficient. 
In Figure~\ref{fig:experiments_metrics_parameter}, the Wasserstein PPC and MMD PPC exhibit divergent behaviors, with the former indicating enhanced performance as reference observations increase, in contrast to the moderate improvement suggested by MMD. 
Therefore, we evaluate the models' predictive performances on the deterministic system by synchronizing the initial states of the reference data and the simulations in Figure~\ref{fig:furuta_observations}. This modification ensures that the similarity between two rollouts can be evaluated by the accumulated error $\mathrm{err} = \sum_i|x^\star_i - x_i|$. While all approaches perform better with more reference observations, MMD-PLI matches the reference dynamics best. Additionally, MMD is favored here, as MMD-based PLI and ABC outperform their Wasserstein counterparts, with both the MMD PPC plot and error plot showcasing congruent trends for $N\geq20$.
The appended posterior plots in Figures~\ref{fig:furuta_posteriors_npp_2}~and~\ref{fig:furuta_posteriors_npp_100} reveal that for $N=2$, all methods are widely spread over the prior region, yet converge to the ground truth.
However, \ac{apt} cannot recover the ground truth for $N \ge 100$, whereas \ac{pli} and \ac{abc} center around the ground truth.

\begin{figure}[!t]
    \centering
    \includegraphics[width=\textwidth]{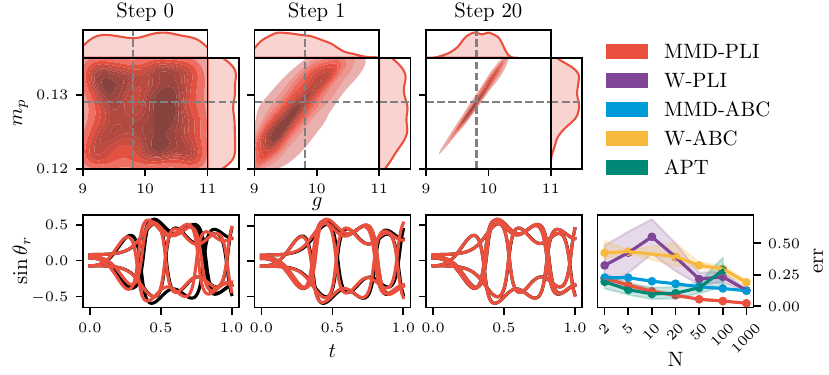}
    \caption{Empirical and quantitative evaluation on the Furuta pendulum.
    (top)~Snapshots of the posterior evolution on the $g- m_p$ plane on the Furuta pendulum for $N=1000$.
    (bottom)~Predictive performance of the learned \ac{mmd}-\ac{pli} posterior for the angular rotation $\sin \theta_r$.
    The stochasticity of the simulator is removed by synchronizing the initial state between the reference and predicted simulations.
    Thus, the only discrepancies between trajectories are due to the model not capturing the dynamics parameters of the system.
    After the inference has been completed (step 20), the predictive simulator (\plimarker) can completely recover the ground truth dynamics (\refmarker). (right)~Evaluation of the mean accumulated error over 1000 trajectories with synchronized initial states between the simulation and the reference trajectory. All approaches improve with rising $N$ while PLI with MMD matches the reference data best.}
    \label{fig:furuta_observations}
    \vspace{-1em}
\end{figure}

\section{Related work}\label{sec:related_work}
In the previous sections, we have seen that \ac{pli} is algorithmically similar to \ac{abc} methods with \ac{smc} samplers.
Therefore, approximating the likelihood by the empirical pseudo-likelihood~\eqref{eq:pseudo_likelihood} enables drawing from the rich toolbox of existing approximate inference algorithms. This section introduces related research fields and shows how \ac{pli} fits among them.

\paragraph{Sequential neural density estimation.}
With the enriched class of neural density estimators, amortized \ac{sbi} methods have received increasing interest in recent years. Similar to \ac{abc}, synthetic samples from the simulator are used to approximate the posterior. Sequential neural density estimation methods can be further classified into methods that directly train a posterior estimator~\citep{Papamakarios_2016,Greenberg_2019}, a neural likelihood~\citep{Papamakarios_2019,Glaser_2022}, or a neural ratio estimator~\citep{Durkan_2020,Miller_2022}. All methods have in common that they do not rely on an approximation of the posterior model but are optimized solely on pairs of parameter samples from a proposal distribution $\fxi^{(k)}\sim p_t(\fxi)$ and its corresponding simulation $\obs^{(k)} \sim p(\obs|\fxi^{(k)}) $.
We note that the original papers have only reported posteriors conditioned on a single observation $\obs$. 
While technically, these methods can incorporate multiple data points, this requires either stacking multiple observations or 
falling back to summary statistics.
As noted in \cite{SBI_2020}, 
neural likelihood estimators can sidestep these requirements by evaluating the log-likelihood of single observations and carrying out \ac{mcmc} sampling on the joint log-likelihood. Yet, leveraging the neural likelihood restricts the evaluation of the posterior.

\paragraph{Summary statistics.}
\ac{abc} has commonly relied on reducing the dimensionality of the raw observations with summary statistics~\citep{Sisson_2018}.
These summaries must be carefully chosen and are often task-specific, restricting the general applicability of \ac{abc}.
Recent additions to \ac{abc} methods report on replacing summary statistics with statistical distances \citep{Drovandi_2022}. While direct comparison of the raw data suffers from the curse of dimensionality, comparing the observations through empirical measures sidesteps this issue~\citep{Drovandi_2022}. \citet{Bernton_2019} report on augmenting the likelihood kernel with the Wasserstein distance, while \citet{Park_2016} leverage the kernelized approximation of the \ac{mmd}~\citep{Gretton_2012}. Other contributions include the Cram\'er-von-Mises distance~\citep{Frazier_2020} and the energy distance~\citep{Nguyen_2020}. While statistical distances are appealing due to their general applicability, \citet{Drovandi_2022} conclude that they are limited by their high computational requirements.
Approaches proposed for automated summary design include \ac{abc} with indirect inference, which utilizes an auxiliary model to evaluate data summaries \citep{Gleim_2013, Drovandi_2015}.

\paragraph{Particle mirror descent.}
A posterior updating similar to ours~\eqref{eq:pli_posterior} has been derived in Particle Mirror Descent (PMD)~\citep{Dai_2016}. PMD tackles particle depletion by incorporating the proposal distribution of the previous round into the optimization process. Furthermore, the authors show that the proposed method converges to the posterior given $m$ posterior samples by $\mathcal{O}(1/\sqrt{m})$. Our version can be seen as extending their approach to the case of intractable likelihoods. We extend PMD to neural density estimators using samples from the proposal posterior~\eqref{eq:pli_posterior} as a training set. 

\paragraph{Geometric path and likelihood tempering.}
Rewriting the optimal posterior~\eqref{eq:pli_posterior} reveals a close relation of the optimal \ac{pli} posterior \eqref{eq:posterior} and the geometric path formulation \citep[p.~335]{Chopin_2020},
$
    {\pi_t(\fxi) \propto \pi^{1-\lambda}_{t-1}(\fxi)~p(\refdata, \fxi)^{\lambda}}.
$
The optimal posterior moves from the proposal distribution $\pi_t(\fxi)$ at inference time $t$ to the target posterior $\pi_t(\fxi|\refdata) \propto p(\refdata, \fxi)$ along the geometric path that is parameterized by $\lambda$. The formulation differentiates from likelihood tempering in \ac{smc} samplers~\citep{Chopin_2020} by leveraging the proposal instead of the prior distribution. Note, however, that for $t=0$, the proposal mimics the prior, and thus the \ac{pli} geometric path has the same boundary values as in classical likelihood tempering. While the tempered posterior cannot be applied to \ac{smc} samplers due to its dependence on the proposal distribution, the geometric path formulation based on the prior distribution gives rise to sequential annealing \ac{abc}~\citep{Albert_2015}.

\paragraph{Generalized variational inference.}
Introduced by \citet{Knoblauch_2022}, Generalized Variational Inference (GVI) is an extension of the standard variational inference framework that starts from the optimization view of Bayes' rule and generalizes it by considering different losses, divergences, and variational families.
Therefore, various Bayesian inference methods can be seen as instantiations of GVI with different choices of these three parameters.
In particular, \ac{abc} and \ac{pli} can be seen as GVI with the choices $P(K_\bandwidth(\refdata, \data), \mathrm{KL}, \mathcal{P}(\Theta))$ since they employ a likelihood kernel as a loss.
Crucially, specifying a different loss function instead of the typical log-likelihood can be shown to add robustness against model misspecification \citep{Knoblauch_2022}.
\acs{pac}-Bayes~\citep{Guedj_2019} can be seen as a generalization of \ac{abc} as it covers the whole space of possible loss functions $l(\refdata, \fxi)$. The \acs{pac}-Bayesian theory provides a broad array of risk bounds for generalized Bayesian learning methods. 

\section{Conclusion}
We propose \acf{pli}, a new addition to the toolbox of \ac{sbi} methods.
PLI is targeted for Bayesian inference tasks in which the posterior is conditioned on multiple observations simultaneously. 
For that, we derive a softened ABC posterior from a constrained variational inference problem and leverage IPMs between the empirical observations to assess the intractable likelihood.
The derived posterior formulation enables the learning of flexible neural density estimators from black-box simulators, extending the range of applicability for \ac{abc} methods.
Our experiments assess how well PLI, ABC, and SNPE perform based on their generative power when given varying amounts of reference observations as a condition. Given few observations, SNPE-based methods perform better than ABC and PLI, which rely on statistical distances. However, when more data is available, ABC and PLI methods perform better.
When the posterior distribution is simple, ABC is efficient at reproducing it using fast particle updates.
However, PLI is a better option for complex posterior distributions because of its more adaptable neural density estimator.
Additionally, PLI evaluates the posterior probability, which is useful in downstream tasks that require uncertainty quantification.
\paragraph{Limitations.}
In PLI, the computational cost is distributed among three main computations: the simulation, the summary statistics estimation, and the normalizing flow training. While the computational effort is large for every computation, PLI leverages parallelization on GPUs for all three computations. In the provided experiments, the training process of the neural model takes the main computational budget, while simulation and summary statistics are negligible. On a Nvidia RTX 3090, ABC typically runs 2-10 min, while PLI and SNPE take 60-90 min, depending on the task. Simpler models however, such as multivariate Gaussian or GMM, reduce the computation time to the simulation. Furthermore, we would like to express that high-fidelity simulators might increase the simulation time significantly, making the simulation the most costly operation within the inference pipeline.
Instead of utilizing IPMs, one could exploit adversarial strategies to approximate the KL divergence, as explored by \citet{Mescheder_2017} and \citet{Rodriguez_2022}.

\section*{Acknowledgements}
This work was funded by the Hessian Ministry of Science and the Arts (HMWK) through the projects \enquote{The Third Wave of Artificial Intelligence - 3AI}, hessian.AI, and the grant \enquote{Einrichtung eines Labors des Deutschen Forschungszentrum für Künstliche Intelligenz (DFKI) an der Technischen Universität Darmstadt}.
Fabio Muratore was employed by the Robert Bosch GmbH during the time of this collaboration.

\bibliographystyle{plainnat}
\bibliography{bibliography}

\begin{thebibliography}{54}
\providecommand{\natexlab}[1]{#1}
\providecommand{\url}[1]{\texttt{#1}}
\expandafter\ifx\csname urlstyle\endcsname\relax
  \providecommand{\doi}[1]{doi: #1}\else
  \providecommand{\doi}{doi: \begingroup \urlstyle{rm}\Url}\fi

\bibitem[Albert et~al.(2015)Albert, K{\"u}nsch, and Scheidegger]{Albert_2015}
Carlo Albert, Hans~R K{\"u}nsch, and Andreas Scheidegger.
\newblock A simulated annealing approach to approximate bayes computations.
\newblock \emph{Statistics and Computing}, 25\penalty0 (6):\penalty0
  1217--1232, 2015.

\bibitem[Arenz et~al.(2020)Arenz, Zhong, and Neumann]{Arenz_2020}
Oleg Arenz, Mingjun Zhong, and Gerhard Neumann.
\newblock Trust-region variational inference with gaussian mixture models.
\newblock \emph{J. Mach. Learn. Res.}, 21:\penalty0 163:1--163:60, 2020.

\bibitem[Beaumont(2010)]{Beaumont_2010}
Mark~A. Beaumont.
\newblock Approximate bayesian computation in evolution and ecology.
\newblock \emph{Annual Review of Ecology, Evolution, and Systematics},
  41\penalty0 (1):\penalty0 379--406, 2010.

\bibitem[Bernton et~al.(2019)Bernton, Jacob, Gerber, and Robert]{Bernton_2019}
Espen Bernton, Pierre~E. Jacob, Mathieu Gerber, and Christian~P. Robert.
\newblock Approximate bayesian computation with the wasserstein distance.
\newblock \emph{Journal of the Royal Statistical Society: Series B (Statistical
  Methodology)}, 81\penalty0 (2):\penalty0 235--269, 2019.

\bibitem[Bradbury et~al.(2018)Bradbury, Frostig, Hawkins, Johnson, Leary,
  Maclaurin, Necula, Paszke, Vander{P}las, Wanderman-{M}ilne, and
  Zhang]{JAX_github}
James Bradbury, Roy Frostig, Peter Hawkins, Matthew~James Johnson, Chris Leary,
  Dougal Maclaurin, George Necula, Adam Paszke, Jake Vander{P}las, Skye
  Wanderman-{M}ilne, and Qiao Zhang.
\newblock {JAX}: composable transformations of {P}ython+{N}um{P}y programs,
  2018.

\bibitem[Ch{\'{e}}rief{-}Abdellatif and Alquier(2019)]{Abdellatif_2019}
Badr{-}Eddine Ch{\'{e}}rief{-}Abdellatif and Pierre Alquier.
\newblock Mmd-bayes: Robust bayesian estimation via maximum mean discrepancy.
\newblock In \emph{Symposium on Advances in Approximate Bayesian Inference},
  2019.

\bibitem[Chopin et~al.(2020)Chopin, Papaspiliopoulos, et~al.]{Chopin_2020}
Nicolas Chopin, Omiros Papaspiliopoulos, et~al.
\newblock \emph{An introduction to sequential Monte Carlo}, volume~4.
\newblock Springer, 2020.

\bibitem[Cranmer et~al.(2020)Cranmer, Brehmer, and Louppe]{Cranmer_2020}
Kyle Cranmer, Johann Brehmer, and Gilles Louppe.
\newblock The frontier of simulation-based inference.
\newblock \emph{Proceedings of the National Academy of Sciences}, 117\penalty0
  (48):\penalty0 30055--30062, 2020.

\bibitem[Cuturi(2013)]{Cuturi_2013}
Marco Cuturi.
\newblock Sinkhorn distances: Lightspeed computation of optimal transport.
\newblock In \emph{{Advances in Neural Information Processing Systems}}, 2013.

\bibitem[Cuturi et~al.(2022)Cuturi, Meng-Papaxanthos, Tian, Bunne, Davis, and
  Teboul]{OTT_2022}
Marco Cuturi, Laetitia Meng-Papaxanthos, Yingtao Tian, Charlotte Bunne, Geoff
  Davis, and Olivier Teboul.
\newblock Optimal transport tools (ott): A jax toolbox for all things
  wasserstein.
\newblock \emph{arXiv preprint arXiv:2201.12324}, 2022.

\bibitem[Dai et~al.(2016)Dai, He, Dai, and Song]{Dai_2016}
Bo~Dai, Niao He, Hanjun Dai, and Le~Song.
\newblock Provable bayesian inference via particle mirror descent.
\newblock In \emph{{Artificial Intelligence and Statistics}}, 2016.

\bibitem[Daniel et~al.(2016)Daniel, Neumann, Kroemer, and Peters]{Daniel_2016}
Christian Daniel, Gerhard Neumann, Oliver Kroemer, and Jan Peters.
\newblock Hierarchical relative entropy policy search.
\newblock \emph{Journal of Machine Learning Research}, 17:\penalty0
  93:1--93:50, 2016.

\bibitem[De~Plaen et~al.(2020)De~Plaen, Fanuel, and Suykens]{de2020wasserstein}
Henri De~Plaen, Micha{\"e}l Fanuel, and Johan~AK Suykens.
\newblock Wasserstein exponential kernels.
\newblock In \emph{2020 International Joint Conference on Neural Networks}.
  IEEE, 2020.

\bibitem[Deisenroth et~al.(2013)Deisenroth, Neumann, Peters,
  et~al.]{deisenroth2013survey}
Marc~Peter Deisenroth, Gerhard Neumann, Jan Peters, et~al.
\newblock A survey on policy search for robotics.
\newblock \emph{Foundations and Trends in Robotics}, 2\penalty0
  (1--2):\penalty0 1--142, 2013.

\bibitem[{Del Moral} et~al.(2006){Del Moral}, Doucet, and Jasra]{DelMoral_2006}
Pierre {Del Moral}, Arnaud Doucet, and Ajay Jasra.
\newblock Sequential monte carlo samplers.
\newblock \emph{Journal of the Royal Statistical Society. Series B (Statistical
  Methodology)}, 68\penalty0 (3):\penalty0 411--436, 2006.

\bibitem[Del~Moral et~al.(2012)Del~Moral, Doucet, and Jasra]{DelMoral_2012}
Pierre Del~Moral, Arnaud Doucet, and Ajay Jasra.
\newblock An adaptive sequential monte carlo method for approximate bayesian
  computation.
\newblock \emph{Statistics and Computing}, 22\penalty0 (5):\penalty0
  1009--1020, 2012.

\bibitem[Dellaporta et~al.(2022)Dellaporta, Knoblauch, Damoulas, and
  Briol]{Dellaporta_2022}
Charita Dellaporta, Jeremias Knoblauch, Theodoros Damoulas, and
  Fran{\c{c}}ois{-}Xavier Briol.
\newblock Robust bayesian inference for simulator-based models via the {MMD}
  posterior bootstrap.
\newblock In \emph{Artificial Intelligence and Statistics}, 2022.

\bibitem[Drovandi and Frazier(2022)]{Drovandi_2022}
Christopher~C. Drovandi and David~T. Frazier.
\newblock A comparison of likelihood-free methods with and without summary
  statistics.
\newblock \emph{Statistics and Computing}, 32\penalty0 (3):\penalty0 42, 2022.

\bibitem[Drovandi et~al.(2015)Drovandi, Pettitt, and Lee]{Drovandi_2015}
Christopher~C Drovandi, Anthony~N Pettitt, and Anthony Lee.
\newblock Bayesian indirect inference using a parametric auxiliary model.
\newblock \emph{Statistical Science}, 30\penalty0 (1):\penalty0 72--95, 2015.

\bibitem[Durkan et~al.(2020)Durkan, Murray, and Papamakarios]{Durkan_2020}
Conor Durkan, Iain Murray, and George Papamakarios.
\newblock On contrastive learning for likelihood-free inference.
\newblock In \emph{{International Conference on Machine Learning}}, 2020.

\bibitem[Frazier(2020)]{Frazier_2020}
David~T Frazier.
\newblock Robust and efficient approximate bayesian computation: A minimum
  distance approach.
\newblock \emph{arXiv preprint arXiv:2006.14126}, 2020.

\bibitem[Furuta et~al.(1992)Furuta, Yamakita, and Kobayashi]{Furuta_1992}
Katsuhisa Furuta, M~Yamakita, and S~Kobayashi.
\newblock Swing-up control of inverted pendulum using pseudo-state feedback.
\newblock \emph{Proceedings of the Institution of Mechanical Engineers, Part I:
  Journal of Systems and Control Engineering}, 206\penalty0 (4):\penalty0
  263--269, 1992.

\bibitem[Glaser et~al.(2022)Glaser, Arbel, Doucet, and Gretton]{Glaser_2022}
Pierre Glaser, Michael Arbel, Arnaud Doucet, and Arthur Gretton.
\newblock Maximum likelihood learning of energy-based models for
  simulation-based inference.
\newblock \emph{arXiv preprint arXiv:2210.14756}, 2022.

\bibitem[Gleim and Pigorsch(2013)]{Gleim_2013}
Alexander Gleim and Christian Pigorsch.
\newblock Approximate bayesian computation with indirect summary statistics.
\newblock \emph{Draft paper: http://ect-pigorsch. mee. uni-bonn.
  de/data/research/papers}, 2013.

\bibitem[Greenberg et~al.(2019)Greenberg, Nonnenmacher, and
  Macke]{Greenberg_2019}
David~S. Greenberg, Marcel Nonnenmacher, and Jakob~H. Macke.
\newblock Automatic posterior transformation for likelihood-free inference.
\newblock In \emph{{International Conference on Machine Learning}}, 2019.

\bibitem[Gretton et~al.(2012)Gretton, Borgwardt, Rasch, Sch{\"o}lkopf, and
  Smola]{Gretton_2012}
Arthur Gretton, Karsten~M Borgwardt, Malte~J Rasch, Bernhard Sch{\"o}lkopf, and
  Alexander Smola.
\newblock A kernel two-sample test.
\newblock \emph{Journal of Machine Learning Research}, 13\penalty0
  (1):\penalty0 723--773, 2012.

\bibitem[Guedj(2019)]{Guedj_2019}
Benjamin Guedj.
\newblock A primer on pac-bayesian learning.
\newblock \emph{arXiv preprint arXiv:1901.05353}, 2019.

\bibitem[Hansel et~al.(2023)Hansel, Urain, Peters, and
  Chalvatzaki]{Hansel_2023}
Kay Hansel, Julen Urain, Jan Peters, and Georgia Chalvatzaki.
\newblock Hierarchical policy blending as inference for reactive robot control.
\newblock In \emph{{International Conference on Robotics and Automation}},
  pages 10181--10188. {IEEE}, 2023.

\bibitem[Hartig et~al.(2011)Hartig, Calabrese, Reineking, Wiegand, and
  Huth]{Hartig_2011}
Florian Hartig, Justin~M Calabrese, Bj{\"o}rn Reineking, Thorsten Wiegand, and
  Andreas Huth.
\newblock Statistical inference for stochastic simulation models--theory and
  application.
\newblock \emph{Ecology letters}, 14\penalty0 (8):\penalty0 816--827, 2011.

\bibitem[Karabatsos and Leisen(2018)]{Karabatsos_2018}
George Karabatsos and Fabrizio Leisen.
\newblock {An approximate likelihood perspective on ABC methods}.
\newblock \emph{Statistics Surveys}, 12:\penalty0 66 -- 104, 2018.

\bibitem[Knoblauch et~al.(2022)Knoblauch, Jewson, and Damoulas]{Knoblauch_2022}
Jeremias Knoblauch, Jack Jewson, and Theodoros Damoulas.
\newblock An optimization-centric view on bayes' rule: Reviewing and
  generalizing variational inference.
\newblock \emph{Journal of Machine Learning Research}, 23\penalty0
  (132):\penalty0 1--109, 2022.

\bibitem[Lee(2012)]{Lee_2012}
Anthony Lee.
\newblock On the choice of {MCMC} kernels for approximate bayesian computation
  with {SMC} samplers.
\newblock In \emph{Winter Simulation Conference}, 2012.

\bibitem[Lenormand et~al.(2013)Lenormand, Jabot, and Deffuant]{Lenormand_2013}
Maxime Lenormand, Franck Jabot, and Guillaume Deffuant.
\newblock Adaptive approximate bayesian computation for complex models.
\newblock \emph{Computational Statistics}, 28\penalty0 (6):\penalty0
  2777--2796, 2013.

\bibitem[Lueckmann et~al.(2021)Lueckmann, Boelts, Greenberg, Gon{\c{c}}alves,
  and Macke]{Lueckmann_2021}
Jan{-}Matthis Lueckmann, Jan Boelts, David~S. Greenberg, Pedro~J.
  Gon{\c{c}}alves, and Jakob~H. Macke.
\newblock Benchmarking simulation-based inference.
\newblock In \emph{{Artificial Intelligence and Statistics}}, 2021.

\bibitem[Marjoram et~al.(2003)Marjoram, Molitor, Plagnol, and
  Tavar{\'e}]{Marjoram_2003}
Paul Marjoram, John Molitor, Vincent Plagnol, and Simon Tavar{\'e}.
\newblock Markov chain monte carlo without likelihoods.
\newblock \emph{Proceedings of the National Academy of Sciences}, 100\penalty0
  (26):\penalty0 15324--15328, 2003.

\bibitem[McGoff et~al.(2015)McGoff, Mukherjee, and Pillai]{McGoff_2015}
Kevin McGoff, Sayan Mukherjee, and Natesh Pillai.
\newblock Statistical inference for dynamical systems: A review.
\newblock \emph{Statistics Surveys}, 9:\penalty0 209--252, 2015.

\bibitem[Mescheder et~al.(2017)Mescheder, Nowozin, and Geiger]{Mescheder_2017}
Lars~M. Mescheder, Sebastian Nowozin, and Andreas Geiger.
\newblock Adversarial variational bayes: Unifying variational autoencoders and
  generative adversarial networks.
\newblock In \emph{{International Conference on Machine Learning}}, 2017.

\bibitem[Miller et~al.(2022)Miller, Weniger, and Forr{\'{e}}]{Miller_2022}
Benjamin~Kurt Miller, Christoph Weniger, and Patrick Forr{\'{e}}.
\newblock Contrastive neural ratio estimation.
\newblock \emph{arXiv preprint arXiv:2210.06170}, 2022.

\bibitem[Muratore et~al.(2021)Muratore, Gruner, Wiese, Belousov, Gienger, and
  Peters]{Muratore_2021_npdr}
Fabio Muratore, Theo Gruner, Florian Wiese, Boris Belousov, Michael Gienger,
  and Jan Peters.
\newblock Neural posterior domain randomization.
\newblock In \emph{{Conference on Robot Learning}}, 2021.

\bibitem[Muratore et~al.(2022)Muratore, Ramos, Turk, Yu, Gienger, and
  Peters]{Muratore_2022_review}
Fabio Muratore, Fabio Ramos, Greg Turk, Wenhao Yu, Michael Gienger, and Jan
  Peters.
\newblock Robot learning from randomized simulations: {A} review.
\newblock \emph{Frontiers in Robotics and {AI}}, 9, 2022.

\bibitem[Nguyen et~al.(2020)Nguyen, Arbel, L{\"u}, and Forbes]{Nguyen_2020}
Hien~Duy Nguyen, Julyan Arbel, Hongliang L{\"u}, and Florence Forbes.
\newblock Approximate bayesian computation via the energy statistic.
\newblock \emph{IEEE Access}, 8:\penalty0 131683--131698, 2020.

\bibitem[Papamakarios and Murray(2016)]{Papamakarios_2016}
George Papamakarios and Iain Murray.
\newblock Fast {\(\epsilon\)}-free inference of simulation models with bayesian
  conditional density estimation.
\newblock In \emph{{Advances in Neural Information Processing Systems}}, 2016.

\bibitem[Papamakarios et~al.(2019)Papamakarios, Sterratt, and
  Murray]{Papamakarios_2019}
George Papamakarios, David~C. Sterratt, and Iain Murray.
\newblock Sequential neural likelihood: Fast likelihood-free inference with
  autoregressive flows.
\newblock In \emph{{Artificial Intelligence and Statistics}}, 2019.

\bibitem[Park et~al.(2016)Park, Jitkrittum, and Sejdinovic]{Park_2016}
Mijung Park, Wittawat Jitkrittum, and Dino Sejdinovic.
\newblock {K2-ABC:} approximate bayesian computation with kernel embeddings.
\newblock In \emph{{Artificial Intelligence and Statistics}}, 2016.

\bibitem[Peters et~al.(2010)Peters, Mulling, and Altun]{Peters_2010}
Jan Peters, Katharina Mulling, and Yasemin Altun.
\newblock Relative entropy policy search.
\newblock In \emph{Conference on Artificial Intelligence}, 2010.

\bibitem[Peyr{\'{e}} and Cuturi(2019)]{Peyre_2019}
Gabriel Peyr{\'{e}} and Marco Cuturi.
\newblock Computational optimal transport.
\newblock \emph{Found. Trends Mach. Learn.}, 11\penalty0 (5-6):\penalty0
  355--607, 2019.

\bibitem[Santana and Hern{\'{a}}ndez{-}Lobato(2022)]{Rodriguez_2022}
Sim{\'{o}}n~Rodr{\'{\i}}guez Santana and Daniel Hern{\'{a}}ndez{-}Lobato.
\newblock Adversarial {\(\alpha\)}-divergence minimization for bayesian
  approximate inference.
\newblock \emph{Neurocomputing}, 471:\penalty0 260--274, 2022.

\bibitem[Sinkhorn(1964)]{Sinkhorn_1964}
Richard Sinkhorn.
\newblock {A Relationship Between Arbitrary Positive Matrices and Doubly
  Stochastic Matrices}.
\newblock \emph{The Annals of Mathematical Statistics}, 35\penalty0
  (2):\penalty0 876 -- 879, 1964.

\bibitem[Sisson et~al.(2007)Sisson, Fan, and Tanaka]{Sisson_2007}
S.~A. Sisson, Y.~Fan, and Mark~M. Tanaka.
\newblock Sequential monte carlo without likelihoods.
\newblock \emph{Proceedings of the National Academy of Sciences}, 104\penalty0
  (6):\penalty0 1760--1765, 2007.

\bibitem[Sisson et~al.(2018)Sisson, Fan, and Beaumont]{Sisson_2018}
Scott~A Sisson, Yanan Fan, and Mark Beaumont.
\newblock \emph{Handbook of approximate Bayesian computation (1st ed.)}.
\newblock CRC Press, 2018.

\bibitem[Tavar{\'e} et~al.(1997)Tavar{\'e}, Balding, Griffiths, and
  Donnelly]{Tavare_1997}
Simon Tavar{\'e}, David~J Balding, R~C Griffiths, and Peter Donnelly.
\newblock {Inferring Coalescence Times From DNA Sequence Data}.
\newblock \emph{Genetics}, 145\penalty0 (2):\penalty0 505--518, 1997.

\bibitem[Tejero-Cantero et~al.(2020)Tejero-Cantero, Boelts, Deistler,
  Lueckmann, Durkan, Gonçalves, Greenberg, and Macke]{SBI_2020}
Alvaro Tejero-Cantero, Jan Boelts, Michael Deistler, Jan-Matthis Lueckmann,
  Conor Durkan, Pedro~J. Gonçalves, David~S. Greenberg, and Jakob~H. Macke.
\newblock sbi: A toolkit for simulation-based inference.
\newblock \emph{Journal of Open Source Software}, 5\penalty0 (52):\penalty0
  2505, 2020.

\bibitem[Toni et~al.(2009)Toni, Welch, Strelkowa, Ipsen, and Stumpf]{Toni_2009}
Tina Toni, David Welch, Natalja Strelkowa, Andreas Ipsen, and Michael~P.H
  Stumpf.
\newblock Approximate bayesian computation scheme for parameter inference and
  model selection in dynamical systems.
\newblock \emph{Journal of The Royal Society Interface}, 6\penalty0
  (31):\penalty0 187--202, 2009.

\bibitem[Watson and Peters(2022)]{Watson_2022}
J.~Watson and J.~Peters.
\newblock Inferring smooth control: Monte carlo posterior policy iteration with
  gaussian processes.
\newblock In \emph{{Conference on Robot Learning}}, 2022.

\end{thebibliography}


\newpage
\appendix

\renewcommand\thefigure{\thesection.\arabic{figure}}    
\setcounter{figure}{0} 
\renewcommand\thetable{\thesection.\arabic{table}}
\setcounter{table}{0}

\section{Algorithmic details: pseudo-likelihood inference}
\label{app:sec-a}
\subsection{Deriving the optimal PLI parameter distribution}\label{app:reps_optimal_dist}
\addtocounter{theorem}{-1}
\begin{theorem}
The optimal target distribution $\pi_t(\fxi)$ in the optimization problem~\eqref{eq:reps_problem} is given by
\begin{equation}
    \pi_{t}(\fxi) \propto \left(\frac{p(\fxi)}{\pi_{t-1}(\fxi)}\right)^{\frac{1}{1 + \eta_t}}p(\refdata|\fxi)^{\frac{1}{1+\eta_t}}~\pi_{t-1}(\fxi)
\end{equation}
where $\eta_t > 0$ is a dual Lagrangian variable corresponding to the trust-region constraint.
\end{theorem}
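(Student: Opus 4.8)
The plan is to treat \eqref{eq:reps_problem} as a constrained convex program over densities and solve it by Lagrangian duality, following the standard REPS-style argument \citep{Peters_2010, Arenz_2020}. The objective $\KL{\pi(\fxi)}{p(\fxi|\refdata)}$ is convex in $\pi$, the trust-region functional $\KL{\pi(\fxi)}{\pi_{t-1}(\fxi)}$ is convex, and the normalization constraint $\int \pi(\fxi)\,\diff\fxi = 1$ is linear, so the problem is convex and any stationary point of the Lagrangian satisfying the KKT conditions is the global minimizer. First I would introduce a multiplier $\eta \geq 0$ for the trust-region inequality and a multiplier $\lambda$ for the normalization equality, forming
\begin{equation}
    L(\pi, \eta, \lambda) = \KL{\pi(\fxi)}{p(\fxi|\refdata)} + \eta\big(\KL{\pi(\fxi)}{\pi_{t-1}(\fxi)} - \varepsilon\big) + \lambda\Big(\textstyle\int \pi(\fxi)\,\diff\fxi - 1\Big).
\end{equation}

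Next I would take the functional derivative of $L$ with respect to the value $\pi(\fxi)$ and set it to zero. Differentiating each entropy term via $\tfrac{\diff}{\diff\pi}\!\left[\pi\log\tfrac{\pi}{r}\right] = \log\tfrac{\pi}{r} + 1$ yields the pointwise stationarity condition
\begin{equation}
    \log\frac{\pi(\fxi)}{p(\fxi|\refdata)} + \eta\,\log\frac{\pi(\fxi)}{\pi_{t-1}(\fxi)} + (1 + \eta + \lambda) = 0.
\end{equation}
Solving for $\pi(\fxi)$ gives $\pi(\fxi) \propto p(\fxi|\refdata)^{\frac{1}{1+\eta}}\,\pi_{t-1}(\fxi)^{\frac{\eta}{1+\eta}}$, with the $\lambda$-dependent additive constant absorbed into the normalizer. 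Substituting Bayes' rule $p(\fxi|\refdata) \propto p(\refdata|\fxi)\,p(\fxi)$ and rewriting $\tfrac{\eta}{1+\eta} = 1 - \tfrac{1}{1+\eta}$, so that $\pi_{t-1}(\fxi)^{\frac{\eta}{1+\eta}} = \pi_{t-1}(\fxi)\,\pi_{t-1}(\fxi)^{-\frac{1}{1+\eta}}$, a short rearrangement reproduces exactly the claimed form \eqref{eq:pli_posterior} with $\eta_t$ identified as the optimal trust-region multiplier.

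The main obstacle I anticipate is not the stationarity computation, which is routine, but rather the justification of the variational step and the characterization of $\eta_t$. The former requires enough regularity on the densities that pointwise differentiation under the integral and interchange of differentiation with the constraints are valid; the latter requires confirming that $\eta_t$ is well-defined and strictly positive. I would obtain $\eta_t$ by inserting the primal solution back into the Lagrangian to form the dual function $g(\eta_t)$ in \eqref{eq:pli_dual}, and argue that complementary slackness forces the trust-region constraint to be active (hence $\eta_t > 0$) precisely when the unconstrained Bayesian posterior lies farther than $\varepsilon$ in KL from $\pi_{t-1}$; otherwise the constraint is inactive and $\eta_t \to 0$, recovering $\pi_t(\fxi) \propto p(\fxi)\,p(\refdata|\fxi)$ as noted after the theorem. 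Since the statement only fixes the functional form of $\pi_t$, I would defer the explicit derivation and numerical optimization of $g(\eta_t)$ to the dual analysis and treat $\eta_t$ here simply as the multiplier guaranteed by strong duality of the convex program.
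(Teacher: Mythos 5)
Your proposal is correct and follows essentially the same route as the paper's proof: a Lagrangian with multipliers for the trust region and normalization, pointwise functional stationarity, and Bayes' rule to split $p(\fxi|\refdata)$ into $p(\refdata|\fxi)\,p(\fxi)$. The only cosmetic difference is that the paper applies the Bayes decomposition to the objective \emph{before} differentiating while you apply it \emph{after} solving the stationarity condition; the algebra and the resulting form of $\pi_t(\fxi)$, as well as the deferral of $\eta_t$ to the dual $g(\eta_t)$, coincide.
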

\begin{proof}
The solution to the stochastic search problem~\eqref{eq:reps_problem} can be obtained from Lagrangian optimization. The optimization problem is restated here for readability
\begin{alignat}{1}
    \footnotesize \pi_t(\fxi) = \arg\min_{\pi(\fxi)}\; & \KL{\pi(\fxi)}{p(\fxi|\refdata)}, \nonumber\\
    \footnotesize \st\; &\KL{\pi(\fxi)}{\pi_{t-1}(\fxi)} \leq \varepsilon,\nonumber\\
    \footnotesize &\int \pi(\fxi)~\diff\fxi = 1.\nonumber
\end{alignat}
We decompose the \ac{kl} objective into two terms by applying Bayes' rule
\begin{equation}
    \KL{\pi(\fxi)}{p(\fxi|\refdata)} = - \E_{\pi(\fxi)}\left[\log p(\refdata|\fxi)\right] + \KL{\pi(\fxi)}{p(\fxi)}.
\end{equation}
The constrained optimization problem~\eqref{eq:reps_problem} can be reformulated with Lagrange multipliers as
\begin{align}
    \mathcal{L}(\pi) &= -\int \pi(\fxi) \log p(\refdata|\fxi)\diff\fxi\nonumber\\
    &\quad + \int \pi(\fxi) \log \frac{\pi(\fxi)}{p(\fxi)}\diff\fxi\nonumber\\
    &\quad + \eta\left(\int \pi(\fxi) \log \frac{\pi(\fxi)}{\pi_{t-1}(\fxi)}\diff\fxi - \varepsilon\right)\nonumber\\
    &\quad + \lambda\left(\int \pi(\fxi)\diff\fxi - 1\right)\nonumber\\
    &= \int \pi(\fxi) \left[-\log p(\refdata|\fxi) + \log \frac{\pi(\fxi)}{p(\fxi)} + \eta\log\frac{\pi(\fxi)}{\pi_{t-1}(\fxi)} + \lambda\right]~\diff\fxi - \eta\varepsilon-\lambda.
    \label{eq:reps_lagrangian}
\end{align}
Here, we leveraged the assumption that the likelihood $p(\data|\fxi)$ is fixed for all joint distributions, and thus, the joint distributions can be split into the likelihood $p(\refdata|\fxi)$ and their associated prior/proposal distributions.
The gradient of the Lagrangian vanishes for the optimal parameter distribution
\begin{equation}
     \left.\frac{\partial \mathcal{L}}{\partial \pi}\right|_{\pi=\pi_{t}} = - \log p(\refdata|\fxi) + \left[\log\frac{\pi_{t}(\fxi)}{p(\fxi)} + 1\right] + \eta\left[\log\frac{\pi_{t}(\fxi)}{\pi_{t-1}(\fxi)} + 1\right] + \lambda = 0.
\end{equation}
Reformulation yields
\begin{equation}
    \begin{aligned}
        \pi_{t}(\fxi) &= p^{\frac{1}{1 + \eta}}(\fxi)~\pi_{t-1}^{\frac{\eta}{1 + \eta}}(\fxi)~\exp{\left(\frac{\log p(\refdata|\fxi)}{1+\eta} -\frac{1 + \eta + \lambda}{1 + \eta}\right)}\\
    &= Q^{-1}(\eta) \left(\frac{p(\fxi)}{\pi_{t-1}(\fxi)}\right)^{\frac{1}{1 + \eta}}\exp{\left( \frac{\log p(\refdata|\fxi)}{1+\eta}\right)}~\pi_{t-1}(\fxi).
    \end{aligned}
    \label{eq:reps_optimal_joint}
\end{equation}
The normalization constant $Q(\eta)=\exp((1 + \eta + \lambda) / (1 + \eta))$ follows by marginalization of \eqref{eq:reps_optimal_joint}
\begin{equation*}
    Q(\eta) = \E_{\pi_{t-1}(\fxi)} \left[ \left(\frac{p(\fxi)}{\pi_{t-1}(\fxi)}\right)^{\frac{1}{1 + \eta}}\exp{\left( \frac{\log p(\refdata|\fxi)}{1+\eta}\right)}\right].
\end{equation*}
We further obtain the dual of the Lagrangian by reinserting \eqref{eq:reps_optimal_joint} into the Lagrangian~\eqref{eq:reps_lagrangian}
\begin{equation}
    g(\eta) = -\eta\varepsilon - (1 + \eta) \log(Q(\eta)).
\end{equation}
\end{proof}

\subsection{Weighted maximum likelihood optimization (m-projection)}\label{app:reps_mprojection}

The m-projection of the optimal posterior onto the approximation family results in a weighted maximum likelihood formulation
\begin{align}
    &\min_{\systemparam} \; \KL{\pi_t(\fxi)}{q_{\systemparam}(\fxi)} \\
    = &\max_{\systemparam} \int \log q_{\systemparam}(\fxi)~\pi_t(\fxi)~\diff\fxi\nonumber\\
    = &\max_{\systemparam} \int \frac{1}{Q}~\left(\frac{p(\fxi)}{\pi_{t-1}(\fxi)}\right)^{\frac{1}{1+\eta_t}}\tilde p(\refdata|\fxi)^{\frac{1}{1+\eta_t}}~\pi_{t-1}(\fxi) \log q_{\systemparam}(\fxi)~\diff\fxi\nonumber\\
    = &\max_{\systemparam} \E_{\pi_{t-1}(\fxi)}\left[\underbrace{ \left(\frac{p(\fxi)}{p_{t-1}(\fxi)}\right)^{\frac{1}{1+\eta_t}}\tilde p(\refdata|\fxi)^{\frac{1}{1+\eta_t}}}_{w} \log q_{\systemparam}(\fxi)\right].
\end{align}
The weighting term $w$ is independent of $\systemparam$, and as such, the formulation facilitates optimizing neural density estimators with gradient descent. This optimization resolves to weighted maximum likelihood where the weights are obtained from the pseudo-likelihood~\eqref{eq:pseudolikelihood} and the importance weights. The weighted maximum likelihood formulation can be optimized in closed form for linear Gaussian models~\citep{Peters_2010,deisenroth2013survey}, with \ac{em} using \acp{gmm}~\citep{Daniel_2016} or with gradient descent as done in this paper.

\subsection{Optimizing with the i-projection}\label{app:reps_iprojection}

We can reformulate the minimization problem for the i-projection in the following way
\begin{align}
    &\footnotesize\min_{\systemparam} \; \KL{q_{\systemparam}(\fxi)}{\pi_{t}(\fxi)} \\
    = &\footnotesize\min_{\systemparam} \E_{q_{\systemparam}(\fxi)}\left[\log \frac{q_{\systemparam}(\fxi)}{Q^{-1}_{\systemparam}~\left(\frac{p(\fxi)}{p_{t-1}(\fxi)}\right)^{1 / (1 + \eta_t)}\tilde p(\refdata|\fxi)^{\frac{1}{1+\eta_t}}~\pi_{t-1}(\fxi)}\right] \label{eq:i_projection}\\
    = &\footnotesize\min_{\systemparam} \; \KL{q_{\systemparam}(\fxi)}{\pi_{t-1}(\fxi)} - \E_{\pi_{t-1}(\fxi)}\left[\frac{q_{\systemparam}(\fxi)}{\pi_{t-1}(\fxi)}~\log \left(\frac{\left(\frac{p(\fxi)}{p_{t-1}(\fxi)}\right)^{1 / (1 + \eta)}\tilde p(\refdata|\fxi)^{\frac{1}{1+\eta_t}}}{Q_{\systemparam}}\right)\right].\nonumber
\end{align}
The equation above alleviates the issue of back-propagating through the simulator by using importance sampling.
The optimization problem is fitting the posterior estimator $q$ to the proposal $p(\fxi)$ while having a regularization term that forces the distribution to fit the reference data. The temperature parameter $\eta$ can thus be interpreted as weighting the regularization term. Small values of $\eta$ put more emphasis on the regularization term, while large values concentrate on containing the information of the proposal.
For linear Gaussian models, a closed-form expression for the \ac{kl} term in~\eqref{eq:i_projection} exists. Therefore, it can be directly optimized using any non-linear optimization method.

\begin{table}[b]
    \centering
    \caption{Kernels used for the \acs{abc} and \acs{pli} variants during the experiments in Section~\ref{sec:experiments}.
    An \ac{ipm} denoted by $D$ plays the role of a distance measure between the reference data $\refdata$ and simulated samples $\data$.
    Parameter $\beta_t$ controls the kernel bandwidth (see Section~\ref{sec:pli_bw_from_tr}).
    \acf{ess} is defined as the inverse of the normalized weight's variance.
    }
    \label{tab:kernels}
    \begin{tabular}{p{0.2\textwidth} p{0.12\textwidth} p{0.15\textwidth} p{0.4\textwidth}}
         \toprule
         $K_{\bandwidth_t}(D(\data, \refdata))$                          & Algorithm &$\bandwidth_t$ estimation & Update\\
         \midrule
         \multirow{2}{*}{$\mathds{1}_{\{D(\data, \refdata)\leq \bandwidth_t\}}$} & \ac{smc} \ac{abc} & \acs{ess} & $\footnotesize\bandwidth_t^\star = \arg\min_{\beta_t}~ \ess(w_t, \bandwidth_t) - \alpha \ess(w_{t-1}, \bandwidth_{t-1})$\\
         &\acs{pmc} \ac{abc} & $\alpha$-Quantile& $\beta_t^\star = Q_{D(\data^{t}, \refdata)}
         (\alpha)$ \\
         \midrule
         $\exp{\left(-\frac{D(\data, \refdata)}{2\bandwidth_t}\right)}$ & \acs{pli} & Trust-region & $\bandwidth_t^\star = \beta (1 + \arg\max_{\eta_t} g(\eta_t)$), see \eqref{eq:pli_dual} \\
         \bottomrule
    \end{tabular}
\end{table}
\subsection{Analysis of the partition function}\label{app:partition_function}
We shed some light on the intractable log-partition function $Z(\fxi)$ introduced in the pseudo-likelihood~\eqref{eq:pseudolikelihood}.
The partition function $Z(\fxi)$ of \eqref{eq:pseudolikelihood} is an integral over sample space of $\obs\in\mathcal{X}$
\begin{equation}
    Z(\fxi) = \int_{\mathcal{X}} \exp\left(-\frac{\D{\refdata}{\data}}{\bandwidth}\right)~\diff\refdata.
\end{equation}
To approximate the intractable quantity, we approximate the integral through Monte Carlo simulations with a uniform distribution $\mathcal{U}$
\begin{equation}
    Z(\fxi) \approx \frac{V}{N}\sum_{i=1}^{N} \exp\left(-\frac{\D{(\refdata)_i}{\data}}{\bandwidth}\right); \quad (\refdata)_i\sim \mathcal{U}(\cdot;\bar{\obs} - 5 \sqrt{\bandwidth},\bar{\obs} + 5 \sqrt{\bandwidth}).
\end{equation}

\begin{figure}[H]
    \centering
    \begin{subfigure}[]{0.32\textwidth}
        \includegraphics[width=\linewidth]{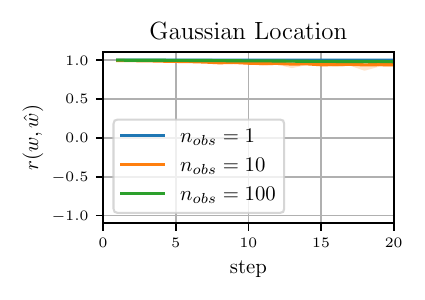}
    \end{subfigure}
    \hfill
    \begin{subfigure}[]{0.32\textwidth}
        \includegraphics[width=\linewidth]{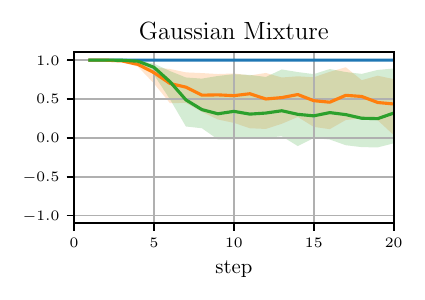}
    \end{subfigure}
    \hfill
    \begin{subfigure}[]{0.32\textwidth}
        \includegraphics[width=\linewidth]{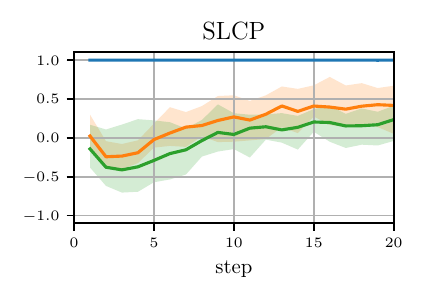}
    \end{subfigure}
    \\[-0.5em]
    \begin{subfigure}[]{0.32\textwidth}
        \includegraphics[width=\linewidth]{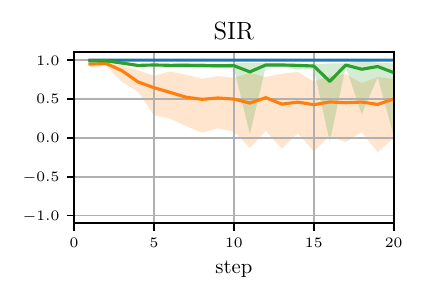}
    \end{subfigure}
    \hspace*{3em}
    \begin{subfigure}[]{0.32\textwidth}
        \includegraphics[width=\linewidth]{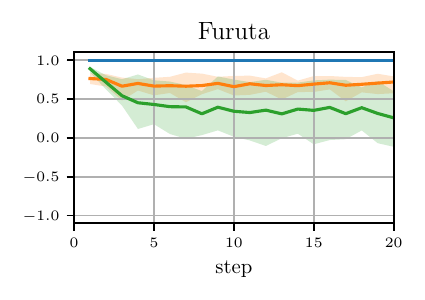}
    \end{subfigure}
    \caption{Spearman correlation coefficient $r(w, \hat w)\in[-1,1]$ between the partition corrected weights $\hat w_i$ and the uncorrected weights $w_i$. High values of $r$ correspond to a high correlation of the weight rankings, thus meaning that the weights preserve relative ordering.}
    \label{fig:spearman_correlation}
\end{figure}
\vspace{-2em}
\begin{figure}[H]
    \centering
    \includegraphics[width=\textwidth]{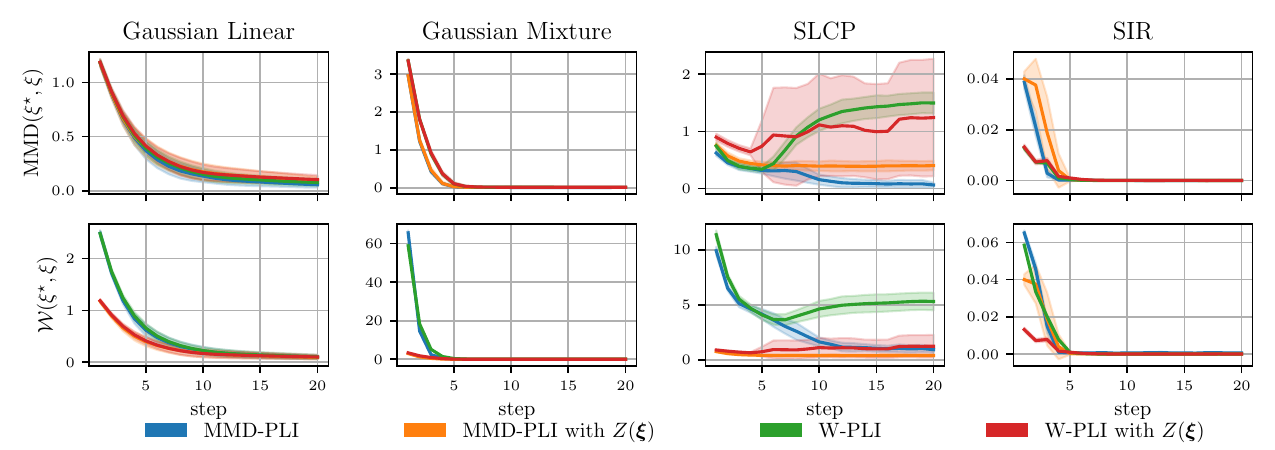}
    \caption{Training plots comparing PLI trained with and without the partition function $Z(\fxi)$ on $N=100$ samples. Top row: \ac{mmd} between posterior samples and model samples. Bottom row: Wasserstein distance between posterior samples and model samples. On all tasks but SLCP, the inclusion of the partition function does not change the posterior inference.}
    \label{fig:partition_corrected}
\end{figure}
 
As we cannot sample over the whole space $\mathcal{X}$, we choose to sample over the $5\sigma$ interval of the exponential kernel, where $\bar \obs$ represents the mean of 100 prior simulations. We use 10000 samples from $\mathcal{U}$ to approximate the partition function and evaluate the $Z(\fxi)$ based on 100 samples from the target posterior $\pi_t(\fxi)$. 

We evaluate the influence of the partition function $Z(\fxi)$ on the performance of our PLI algorithm by comparing the weights $\hat w_i$ and $w_i$ as defined in \eqref{eq:pli_loss} with and without $Z(\fxi)$, respectively.
Despite the weights being numerically different, we hypothesize that they do not change the relative ordering of the samples and, therefore, do not affect the Weighted Maximum Likelihood (WML) update~\eqref{eq:pli_loss} significantly.
Therefore, we employ the Spearman correlation coefficient, a nonparametric measure of rank correlation, to capture such dependencies, which we report in Figure~\ref{fig:spearman_correlation}.
Contrary to our hypothesis, the weights with and without normalization are not perfectly rank-correlated in four out of five experiments when there is more than one observed data point.
Potentially, this could lead to a different WML update, but when we evaluate the whole PLI algorithm with and without the partition function $Z(\fxi)$, we find that its influence is marginal for both \ac{mmd}-\ac{pli} and W-\ac{pli} in Figure~\ref{fig:partition_corrected}.
Only on the SLCP task, an improvement of W-PLI can be seen when using the partition function, whereas MMD-PLI performs even better without it.

We have motivated \ac{pli} from a \ac{vi} perspective.
As mentioned in the related work section (see Sec.~\ref{sec:related_work}), this is similar to the \ac{gvi} approach~\citep{Knoblauch_2022}, which also defines the posterior as a solution to an optimization problem, namely
\begin{equation}
    \pi(\fxi) = \underset{\pi}{\mathrm{argmin}}\; \bandwidth N \E_{\pi(\fxi)}\left[\mathcal{L}_N(\fxi)\right] + \KL{\pi(\fxi)}{p(\fxi)}.
\end{equation}
When the loss is defined as the log-likelihood, $\mathcal{L}_N(\fxi) = \log p(\refdata|\fxi)$ with $\bandwidth = 1/N$, the \ac{pli} objective~\eqref{eq:reps_problem} is recovered.
However, when we substitute the pseudo-likelihood~\eqref{eq:pseudolikelihood} into the GVI objective, we obtain an additional term in the loss which is given as an expectation over the log-partition function $\E_{\pi}[-\log Z(\fxi)]$.
Therefore, PLI can be seen as GVI with an additional loss term, or GVI can be equated with PLI using unnormalized pseudo-likelihood.

\section{Algorithmic details: sequential Monte Carlo ABC}\label{app:smc_abc}
The foundations of \ac{smc}-\ac{abc} have been laid by \citet{DelMoral_2006}, who introduced \ac{smc} samplers.
These samplers describe an approximate inference routine in which the posterior is approximated through a sequence of intermediate target posteriors. In the context of \ac{abc}, the sequence of intermediate posteriors is defined by an adaptive bandwidth $\bandwidth_t$ of the approximate posterior $p_{\bandwidth_t}(\fxi|\refdata)$ \eqref{eq:posterior}.
Additionally, the sample efficiency of \ac{abc} is improved by replacing the prior as the sampling distribution with a proposal distribution $\pi_t(\fxi)$. 
The proposal distribution is represented by a set of particles $\pi_t(\fxi) = 1/M\sum_{i=1}^{M}\delta_{\fxi_t^{(i)}}(\fxi)$ and through importance sampling an approximation of the target posterior $p_{\bandwidth_t}(\fxi|\refdata)$ can be obtained, (see \eqref{eq:importance_weights}),
\begin{equation}
    p_{\bandwidth_t}(\fxi|\refdata) \approx q_t(\fxi) = \sum_{i=1}^{M} W_t^{(i)} \delta_{\fxi_t^{(i)}}(\fxi); \quad W_t^{(i)} = \frac{p_{\bandwidth_t}(\fxi_t^{(i)}|\refdata)}{\pi_t(\fxi^{(i)})},\label{eq:particle_sampler_marginal}
\end{equation}
where $W_t^{(i)}$ denote the importance weights. The proposal distribution $\pi_t(\fxi)$ should ideally stay close to the target posterior $p_{\bandwidth_t}(\fxi|\refdata)$ to improve the sample efficiency. Therefore, the proposal distribution is updated based on a Markov kernel $K_{t+1}(\fxi_t, \fxi_{t+1})$ which is the transition probability from $\fxi_t$ to $\fxi_{t+1}$. The update of the proposal distribution is typically numerically intractable as it requires marginalization, i.e., integration over $\fxi_t$ for each inference step $0:t$
\begin{equation}
    \pi_{t+1}(\fxi_{t+1}) = \int K_{t+1}(\fxi_t, \fxi_{t+1}) \pi_{t}(\fxi_t)\diff\fxi_t.
\end{equation}
To alleviate the computational burden, \citet{DelMoral_2006} show that the joint representation of the proposal distribution $\pi_t(\fxi_{0:t})$ can be efficiently calculated as it only requires solving the product over $t$ transitions
\begin{equation}
    \pi_{t}(\fxi_{0:t}) = \pi_0(\fxi_0) \prod_{\tau=0}^{t} K_{t+1}(\fxi_t, \fxi_{t+1}).
\end{equation}
We define the joint proposal distribution as the empirical distribution $\pi_t(\fxi_{0:t}) = M^{-1}\sum_{i=1}^{M}\delta_{\fxi_{0:t}^{(i)}}(\fxi_{0:t})$ defined by a set of joint particles $\fxi^{(i)}_{0:t}$. Thus, the joint posterior approximation of $p_{\bandwidth_t}(\fxi_{0:t}|\refdata)$ based on the importance weights reads as
\begin{align}
    q_t(\fxi_{0:t}) &= \sum_{i=1}^{M} w_t^{(i)} \delta_{\fxi_{0:t}^{(i)}}(\fxi_{0:t}); \quad w_t^{(i)} = \frac{p_{\bandwidth_t}(\fxi_{0:t}^{(i)}|\refdata)}{\pi_t(\fxi_{0:t}^{(i)})}\\
    \Rightarrow\quad q_t(\fxi_{t}) &= \sum_{i=1}^{M} w_t^{(i)} \delta_{\fxi_t^{(i)}}(\fxi_{t}); \quad w_t^{(i)} = \frac{p_{\bandwidth_t}(\fxi_{0:t}^{(i)}|\refdata)}{\pi_t(\fxi_{0:t}^{(i)})}.
\end{align}
The marginal target posterior approximation $q_t(\fxi_{t})$ can be directly recovered from the joint approximation $q_t(\fxi_{0:t})$. Furthermore, both distributions share their weights which means that it is only required to estimate the weights $w_t$ in order to approximate the target posteriors $p_{\bandwidth_t}(\fxi_t|\refdata)$.
In general, the probability of the target joint posterior $p_{\bandwidth_t}(\fxi_{0:t}^{(i)}|\fxi)$ is intractable. Therefore, the authors introduce an auxiliary backward Markov kernel $L_t(\fxi_{t+1}, \fxi_t)$ to simplify the computation
\begin{equation}
    p_{\bandwidth_t}(\fxi_{0:t}|\refdata) = p_{\bandwidth_t}(\fxi_t|\refdata) \prod_{\tau=0}^{t-1} L_\tau(\fxi_{\tau+1}, \fxi_\tau).
\end{equation}
Assuming that a posterior approximation of the target posterior $p_{\bandwidth_t}(\fxi_t|\refdata)$ is available through the set of weighted particles $\{(w_t^{(i)}, \fxi_t^{(i)})\}$ and the particles of the proposal distribution ${ \pi_t(\fxi) = M^{-1}\sum_{i=1}^{M}\delta_{\fxi_t^{(i)}}(\fxi) }$ are updated based on a kernel transition ${ \fxi_{t+1}^{(i)} \sim K_{t+1}(\fxi_t^{(i)}, \fxi_{t+1}^{(i)}) }$, then the importance weights $w_{t+1}$ are updated based on the following recursion
\begin{align}
    w_{t+1} &= \frac{ p_{\bandwidth_{t+1}}(\fxi_{0:t+1})}{\pi_{t+1}(\fxi_{0:t+1})} = \frac{p_{\bandwidth_{t+1}}(\fxi_{t+1})\prod_{\tau=0}^{t} L_t(\fxi_{\tau+1}, \fxi_\tau)}{\pi_0(\fxi_0)\prod_{\tau=0}^{t}  K_t(\fxi_\tau, \fxi_{\tau+1})}\\
    &= \underbrace{\frac{p_{\bandwidth_{t+1}}(\fxi_{t+1}) L_t(\fxi_{t+1}, \fxi_t)}{p_{\bandwidth_{t}}(\fxi_{t}) K_{t+1}(\fxi_t, \fxi_{t+1})}}_{\hat{w}_{t+1}}
    \underbrace{\frac{p_{\bandwidth_{t}}(\fxi_{t})\prod_{\tau=0}^{t-1} L_t(\fxi_{\tau+1}, \fxi_\tau)}{\pi_0(\fxi_0)\prod_{\tau=0}^{t-1}  K_t(\fxi_\tau, \fxi_{\tau+1})}}_{w_t}.
\end{align}
Thus, the sequential update is performed by updating the current weights $w_t$ with the marginal weights $\hat{w}_{t+1}$.
Up to now, the choice of the backward kernel has been neglected. As it is an auxiliary quantity, several approximations can be made to model the backward kernel. \citet{DelMoral_2006} refer to the optimal backward kernel as the Markov kernel that minimizes the variance of the particles
\begin{equation*}
    L_t^{\mathrm{opt}}(\fxi_{t+1}, \fxi_{t}) = \frac{\pi(\fxi_{t}) K_{t+1}(\fxi_{t}, \fxi_{t+1})}{\pi_{t+1}(\fxi_{t+1})}.
\end{equation*}
They further show that the optimal backward kernel recovers the marginal weights from \eqref{eq:particle_sampler_marginal}
\begin{equation}
    w^{\mathrm{opt}}_{t+1} := \frac{p_{\bandwidth_{t+1}}(\fxi|\refdata)}{\pi_{t+1}(\fxi)} = W_{t+1}.
\end{equation}
In general, the optimal backward kernel is numerically intractable and has led to several other approximations summarized in Table~\ref{tab:optimal_L}. Depending on choice of approximation, a number of different \ac{smc}-\ac{abc} methods have evolved, namely the classical \ac{smc}-\ac{abc} approach by \citet{DelMoral_2012}, \ac{pmc}-\ac{abc}~\citep{Toni_2009, Beaumont_2010, Lenormand_2013}, and \acl{mh} \ac{abc}~\citep{Lee_2012}.
Please refer to those references as well as Algorithms~\ref{alg:smc_abc}~and~\ref{alg:pmc_abc} for implementation details of these approaches.

\begin{table}[b]
    \centering
    \caption{Approximations of the optimal backward kernel $L_{t}^{\mathrm{opt}}(\fxi_{t+1}, \fxi_t)$ lead to different \ac{smc}-\ac{abc} approaches.}
    \label{tab:optimal_L}
    \begin{tabular}{l c c c}
        \toprule
        Algorithm & Assumption & $\tilde L_{t}$ & $\hat w_{t+1}$\\
        \midrule
        Optimal             & - & $\frac{\pi_{t}(\fxi_{t}) K_{t+1}(\fxi_{t}, \fxi_{t+1})}{\pi_{t+1}(\fxi_{t+1})}$ & - \\
        \acs{pmc}-\acs{abc} & $\pi \approx p_{\bandwidth_t}$& $\frac{p_{\bandwidth_t}(\fxi_{t}|\refdata) K_{t+1}(\fxi_{t}, \fxi_{t+1})}{\int p_{\bandwidth_t}(\fxi_{t}|\refdata) K_{t+1}(\fxi_{t}, \fxi_{t+1})\diff\fxi_t}$ & $\frac{p_{\bandwidth_{t+1}}(\fxi_{t+1}|\refdata)}{\int p_{\bandwidth_t}(\fxi_{t}|\refdata) K_{t+1}(\fxi_{t}, \fxi_{t+1})\diff\fxi_t}$\\
        \acs{smc}-\acs{abc} & $p_{\bandwidth_{t+1}} \approx p_{\bandwidth_t}$ & $\frac{p_{\bandwidth_t}(\fxi_{t}|\refdata) K_{t+1}(\fxi_{t}, \fxi_{t+1})}{p_{\bandwidth_t}(\fxi_{t+1}|\refdata)}$ & $\frac{p_{\bandwidth_{t+1}}(\fxi_t |\refdata)}{p_{\bandwidth_t}(\fxi_t |\refdata)}$\\
        \acs{mh}-\acs{abc}  & \multicolumn{2}{c}{$L_{t}(\fxi_{t+1}, \fxi_{t})=K_{t+1}(\fxi_{t+1}, \fxi_t)$} & $\frac{p_{\bandwidth_{t+1}}(\fxi_{t+1}|\refdata) K_{t+1}(\fxi_{t+1}, \fxi_t)}{p_{\bandwidth_t}(\fxi_{t}|\refdata) K_{t+1}(\fxi_{t}, \fxi_{t+1})}$\\
        \bottomrule
    \end{tabular}
\end{table}
\begin{minipage}{\textwidth}
\begin{minipage}[t]{0.48\textwidth}
\begin{algorithm}[H]
    \caption{\acl{smc} \acs{abc}~\citep{DelMoral_2012}}
    \label{alg:smc_abc}
    \begin{algorithmic}[1]
        \STATE {\bfseries input:} reference data $\refdata$, prior $p(\fxi)$, stochastic simulator $p(\obs | \fxi)$, \ac{ipm} $D(\cdot, \cdot)$, max. iteration count $T$, forward kernel $K(\fxi_t, \fxi_{t+1})$, resampling threshold $V$, $\alpha$
        \STATE initialize particles $\fxi_0^{(k)}\sim p(\cdot)$
        \STATE initialize particle weights $w_0^{(k)} = 1 / K$
        \FOR{$t$ in $1\!:\!T$}
            \FOR{each $\fxi_{t-1}^{(k)}$}
                \STATE simulate $\data^{(k)} = \{\obs^{(k)}_{m}\sim p(\obs|\fxi_{t-1}^{(k)})\}$
                \STATE compute \ac{ipm} $s_{t-1}^{(k)} = D(\refdata, \data^{(k)})$
            \ENDFOR
            \STATE update the bandwidth $\bandwidth_t$ by solving
            \begin{align*}
                \mathrm{ESS}(\{w_t^{(k)}\}, \bandwidth_t) &= \alpha \cdot \mathrm{ESS}(\{w_{t-1}^{(k)}\}, \bandwidth_{t-1})\\
                w_t^{(k)} &\propto w_{t-1}^{(k)}~\frac{\mathds{1}_{\{s_{t-1}^{(k)}\leq \bandwidth_t\}}}{\mathds{1}_{\{s_{t-1}^{(k)}\leq \bandwidth_{t-1}\}}}
            \end{align*}
            \IF{$\mathrm{ESS}(\{w_t^{(k)}\}, \bandwidth_t) < V$}
                \STATE resample K particles $\fxi_{t}^{(k)}$ from $\{\fxi_{t-1}^{(k)}\}$
                \STATE set weights $w_t^{(k)} = 1/K$
            \ENDIF
            \STATE sample $K$ particles $\fxi_t^{(k)} \sim K(\fxi_{t-1}^{(k)}, \fxi_{t}^{(k)})$
        \ENDFOR
        \STATE {\bfseries output:} posterior particles $\fxi_T^{(k)}$
        \vspace{2em}
    \end{algorithmic}
\end{algorithm}
\end{minipage}
\begin{minipage}[t]{0.48\textwidth}
\begin{algorithm}[H]
    \caption{PMC-ABC~\citep{Lenormand_2013}}
    \label{alg:pmc_abc}
    \begin{algorithmic}[1]
        \STATE {\bfseries input:} reference data $\refdata$, prior $p(\fxi)$, stochastic simulator $p(\obs | \fxi)$, \ac{ipm} $D(\cdot, \cdot)$, max. iteration count $T$, forward kernel $K_{t+1}(\fxi_t, \fxi_{t+1})$, $\alpha$-Quantile $\alpha$
        \STATE initialize particles $\fxi_0^{(k)}\sim p(\cdot)$
        \STATE initialize particle weights $w_0^{(k)} = 1 / K$
        \STATE store number of best particles $K_\alpha = \alpha K$,
        \FOR{$t$ in $1\!:\!T$}
            \STATE elect $K_\alpha$ best particles $\hat\fxi_{t-1}^{(k)}$
            \STATE sample $K-K_\alpha$ proposal particles $\tilde\fxi_t^{(l)} \sim K_{t}(\hat\fxi_{t-1}^{(k)}, \tilde\fxi_{t}^{(l)})$
            \FOR{each $\tilde \fxi_t^{(l)}$}
                \STATE simulate $\data^{(l)} = \{\obs^{(l)}_{m}\sim p(\obs|\tilde \fxi_{t}^{(l)})\}$
                \STATE compute \ac{ipm} $s_{t}^{(l)} = D(\refdata, \data^{(l)})$
            \ENDFOR
            \STATE update bandwidth based on the empirical $\alpha$-Quantile $\bandwidth_t = \mathcal{Q}_{\{s_{t-1}^{(k)}, s_{t}^{(l)}\}}(\alpha)$
            \STATE update weights
            \vspace{-0.21em}
            \begin{equation*}
                w_t^{(k)} = \frac{p(\tilde \fxi_t^{(k)})}{\sum_{i=1}^{K_\alpha} \frac{w_{t-1}^{(i)}}{\sum_{j=1}^{K_\alpha} w_{t-1}^{(j)}} K_{t}(\fxi_{t-1}^{(i)}, \tilde \fxi_{t}^{(k)})}
            \end{equation*}
            \STATE set $K$ new particles $\fxi_t^{k} = \{\hat\fxi_{t-1}^{(k)}, \tilde\fxi_t^{(k)}\}$
            \STATE update forward kernel $K_{t+1}(\fxi_t, \fxi_{t+1})$
        \ENDFOR
        \STATE {\bfseries output:} posterior particles $\fxi_T^{(k)}$
    \end{algorithmic}
\end{algorithm}
\end{minipage}
\end{minipage}

\setcounter{figure}{0} 
\section{Experimental details}\label{appendix:experimental_details}
Here we detail the experimental configurations to reproduce the results covered in Figure~\ref{fig:experiments_metrics_parameter}~and~\ref{fig:experiments_all_metrics}. A small grid search has been carried out over the \textit{learning rate}, the \textit{trust-region parameter} $\varepsilon$, the \textit{batch size}, and the \textit{number of training samples} on the SLCP and Furuta task for $N=50$. The best-fitting hyperparameters over the two tasks are reported in Table~\ref{tab:alg_settings} and used throughout the experiments. The remaining parameters are taken from \citep{Lueckmann_2021} to make the results comparable.

\begin{table}[b]
    \centering
    \caption{Hyper-parameter settings of the \ac{sbi} methods as used for the experiments in Section~\ref{sec:experiments}.
    Forward slashes symbolize layers of a neural network.}
    \begin{tabular}{l c}
        \toprule
        Parameter & Value \\
        \midrule
        \multicolumn{2}{c}{\acs{pli} (Ours)}\\
        \midrule
        Likelihood kernel & Exponential Kernel \\
        Trust-region threshold $\varepsilon$ & 0.5 \\
        Model & \acf{nsf} \\
        Bijector & Rational Quadratic Spline with param size $D$ \\
        \# Bins & 10 \\
        Conditioning \acs{mlp} & input dim / 50 / 50 / 50 / $D$ \\
        \# Bijectors / Transforms & 5 \\
        Base distribution & $\mathcal{N}(\mathbf{0}, \mathbf{1})$ \\
        Learning rate & \num{1e-5} \\
        Epochs & 20 \\
        Train samples per iteration & 5000 \\
        Batch size & 125 \\
        \midrule
        \multicolumn{2}{c}{\acs{pmc}-\acs{abc}~\citep{Lenormand_2013}}\\
        \midrule
        Likelihood kernel & Uniform Kernel \\
        Likelihood update & $\alpha$-Quantile, $\alpha = 0.1$ (see Table~\ref{tab:kernels})\\
        $\alpha$ & 0.1\\
        Reverse transition kernel $\tilde L_t$ & reverse PMC kernel (see Table~\ref{tab:optimal_L}) \\
        Particles & 1000 \\
        Epochs & 200\\
        Perturbation kernel & GMM with 5 components\\
        \midrule
        \multicolumn{2}{c}{\acs{apt}~\citep{Greenberg_2019}}\\
        \midrule
        Model & Conditional \acf{nsf} \\
        Bijector & Rational Quadratic Spline with param size $D$ \\
        \# Bins & 10 \\
        \# Bijectors / Transforms & 5 \\
        Conditioning \acs{mlp} & input dim / 32 / 32 / 32 / $D$ \\
        Base distribution & $\mathcal{N}(\mathbf{0}, \mathbf{1})$ \\
        \# Atoms & 10 \\
        Learning rate & \num{1e-5} \\
        Epochs & 20 \\
        Train samples per epoch & 5000\\
        Batch size & 500 \\
        \bottomrule
    \end{tabular}
    \label{tab:alg_settings}
\end{table}

We complement Figure~\ref{fig:experiments_metrics_parameter} with Figure~\ref{fig:experiments_all_metrics} to compare the results in the observation space and include the Furuta pendulum.

\begin{figure*}[!ht]
    \centering
    \includegraphics[width=\textwidth]{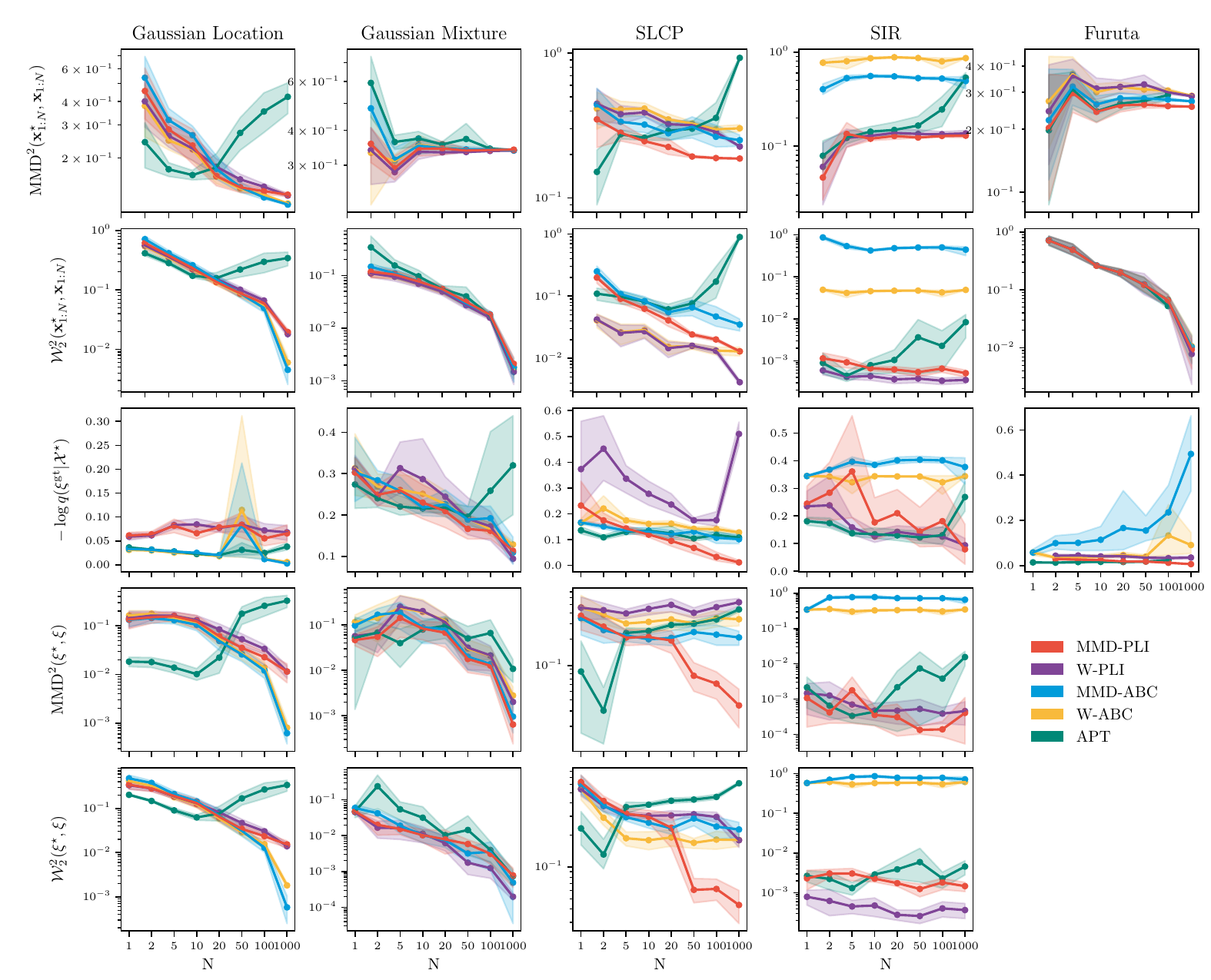}
    \caption{Evaluation of the posterior performance on three different tasks (displayed along the columns).
    A node represents the mean and standard deviation of 10 experiments with different random seeds, each carried out using $N$ data points for conditioning.
    The samples from the approximate posterior $\fxi\sim q(\fxi|\refdata)$ are compared against the reference posterior samples with the Wasserstein distance and the \ac{mmd} when available.
    Additionally, the log probability of the ground-truth parameter $\fxi^{gt}$ is evaluated and posterior predictive checks are carried out on all tasks.
    The ground-truth parameters are described in Appendix \ref{appendix:experimental_details}.
    Lower values are better for all metrics.}
    \label{fig:experiments_all_metrics}
\end{figure*}

\subsection{Approximation of the integral probability metrics}\label{app:ipm}
In the context of this paper, we consider two instances of \acp{ipm} $D(p^\star(\obs), p(\obs|\fxi))$ between the data generating distribution $p^\star(\obs)$ and the likelihood $p(\obs|\fxi)$ --- the maximum mean discrepancy $\mathrm{MMD}$ and the squared 2-Wasserstein distance $\mathcal{W}_2$. To simplify the notation, we formulate the discrepancy between the pdfs $p(\fx)$ and $q(\fx)$ whose empirical probability distributions are denoted by $\tilde p(\fx) = 1/N \sum_{i=1}^{N}\delta_{\fx_i}(\fx)$ and $\tilde q(\fy) = 1/M \sum_{j=1}^{M} \delta_{\fy_j}(\fy)$.
Furthermore, we denote the cost between individual samples by $c(\fx, \fy)$. 
\paragraph{Maximum mean discrepancy}
The \ac{mmd}~\citep{Gretton_2012} can be formulated with respect to an evaluation kernel $k(\fx, \fy)$ as the sum of three terms
\begin{equation}
    \mathrm{MMD}^2(p, q) = \E_{\substack{\fx \sim p(\fx)\\\fy\sim p(\fy)}}\left[k(\fx, \fy)\right] - 2\E_{\substack{\fx\sim p(\fx)\\\fy\sim q(\fy)}}\left[k(\fx, \fy)\right] + \E_{\substack{\fx\sim q(\fx)\\ \fy \sim q(\fy)}}\left[k(\fx, \fy)\right].
\end{equation}
As the expectations are generally intractable, an unbiased estimate of \ac{mmd} based on samples drawn from $p$ and $q$ is used~\citep{Gretton_2012}
\begin{equation}
    \footnotesize \mathrm{MMD}^2(\tilde p, \tilde q) \approx \frac{1}{N(N-1)}\sum_{i\neq i^\prime}^{N} k(\fx_i, \fx_{i^\prime}) - \frac{2}{N M}\sum_{i, j=1}^{N, M} k(\fx_i, \fy_j) + \frac{1}{M(M-1)}\sum_{j\neq j^\prime}^{M} k(\fx_j, \fx_{j^\prime}).
\end{equation}
Here, $\fx_i$ and $\fy_j$ represent samples drawn from the sampling distributions $\fx_{1:N}\sim p(\fx)$ and $\fy_{1:M} \sim q(\fy)$.
In this paper, a Gaussian kernel $\exp(-1 / (2\ell) c(\fx, \fy))$ with bandwidth $\ell$ is employed. The bandwidth is known to be very sensitive, which is why the kernel is evaluated over a variety of bandwidths $\ell = \{1, 10, 20, 40, 80, 100, 130, 200, 400, 800, 1000\}$ by summing over the bandwidths $k(\fx_i, \fy_j) = \sum_\ell k_\ell(x_i, y_j)$ \citep{Dellaporta_2022}.    
\paragraph{Wasserstein distance.}
In the experiments, we consider the squared 2-Wasserstein, which can be formulated as the solution to the optimal transport problem
\begin{equation}
    \mathcal{W}_2^2(p, q) = \inf_{\gamma} \int c(\fx, \fy) \gamma(\fx, \fy)~\diff\fx\diff\fy.
\end{equation}
The problem is also known as the Kantorovich problem, searching for the optimal coupling $\gamma\in\Gamma(p, q)$ in the set of joint distributions that admit $p$ and $q$ through marginalization. 
For the empirical measures $\tilde p$ and $\tilde q$, the Kantorovich problem can be formulated as a linear program
\begin{equation}
    \mathcal{W}_2^2 = \min_{\mathbf{P}\in \mathbf{U}} \sum_{i,j}P_{ij} C_{ij}.
    \label{eq:discrete_kantorovich}
\end{equation}
Here, $C_{ij} = c(\fx_i, \fy_j)$ is the cost matrix containing the pairwise comparisons between the samples drawn from $\tilde p(\fx)$ and $\tilde q(\fy)$. The linear program searches for the optimal coupling matrix $\mathbf{P}$ among the set of doubly stochastic matrices $\mathbf{U} = \{\mathbf{P}\in\R_{+}^{N\times M}: \mathbf{P}\bm{1}_{M} = 1/N \bm{1}_N,\, \mathbf{P}^{\intercal}\bm{1}_{N} = 1/M \bm{1}_M\}$.
\citet{Peyre_2019} show that introducing an entropy regularization term $\varepsilon H(\mathbf{P})$ to the objective \eqref{eq:discrete_kantorovich} leads to an iterative scheme that can be solved with the Sinkhorn algorithm~\citep{Sinkhorn_1964}. The iterative procedure enables parallelization on hardware accelerators to efficiently solve the optimal transport problem. Furthermore, it can be seen that the 2-Wasserstein distance can be recovered in the limit $\varepsilon \rightarrow 0$. We leverage the JAX library, OTT~\citep{OTT_2022}, to approximate $\mathcal{W}_2^2$ with Sinkhorn iterations for the computations.

\subsection{Gaussian location}\label{sec:gaussianlocation}
The Gaussian location model is a 10-dimensional Gaussian model. The ten dimensional parameters $\fxi \in [-1, 1]^{10}$ define the means of the model $\mathcal{N}(\obs| \fmu=\fxi, \fSigma=0.1 \fI)$. We choose a Gaussian prior $p(\fxi) = \mathcal{N}(\fxi|\boldsymbol{0}, 0.1 \fI)$ for which the posterior can be recovered in closed form. The ground-truth parameter is sampled uniformly within the posterior support $\fxi^{\mathrm{gt}} \sim \mathcal{U}(\boldsymbol{-1}, \boldsymbol{1})$.
\begin{figure}[ht]
\centering
    \centering
    \begin{subfigure}[]{0.24\textwidth}
        \includegraphics[width=\linewidth]{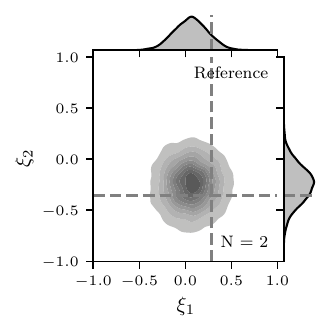}
    \end{subfigure}
    \begin{subfigure}[]{0.24\textwidth}
        \includegraphics[width=\linewidth]{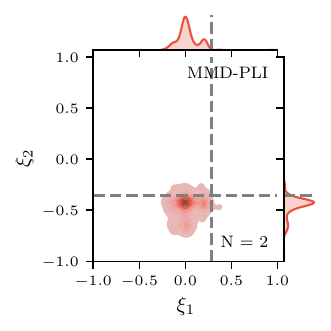}
    \end{subfigure}
    \begin{subfigure}[]{0.24\textwidth}
        \includegraphics[width=\linewidth]{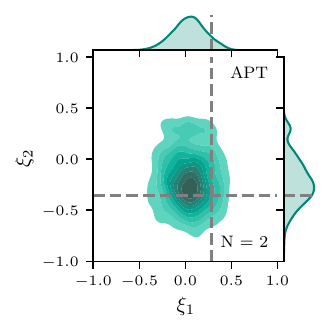}
    \end{subfigure}
    \begin{subfigure}[]{0.24\textwidth}
        \includegraphics[width=\linewidth]{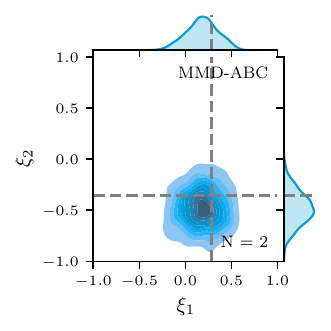}
    \end{subfigure}
    \begin{subfigure}[]{0.24\textwidth}
        \includegraphics[width=\linewidth]{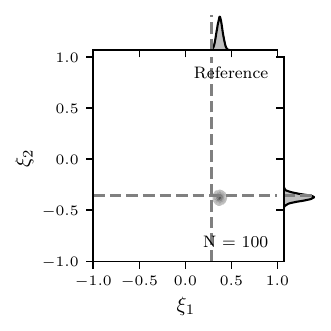}
    \end{subfigure}
    \begin{subfigure}[]{0.24\textwidth}
        \includegraphics[width=\linewidth]{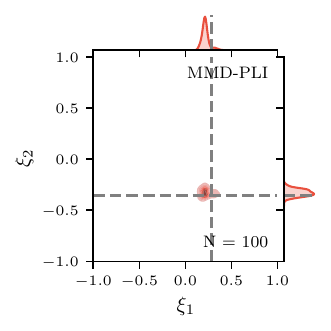}
    \end{subfigure}
    \begin{subfigure}[]{0.24\textwidth}
        \includegraphics[width=\linewidth]{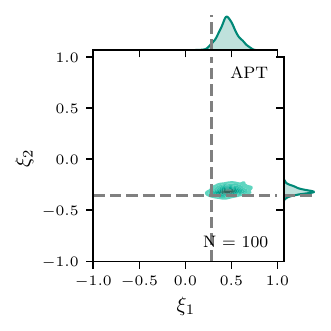}
    \end{subfigure}
    \begin{subfigure}[]{0.24\textwidth}
        \includegraphics[width=\linewidth]{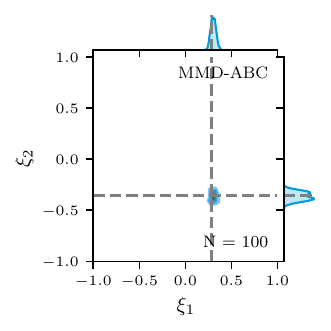}
    \end{subfigure}
    \caption{Slice of the posterior through the $\fxi_1-\fxi_2$ plane.
    The upper row shows experiments conducted on $N=2$ reference observations.
    The lower row shows the approximate posteriors for $N=100$ reference observations.
    The dotted line represents the ground truth parameter $\fxi^{(\mathrm{gt})}$ that was used to generate $\refdata$.
    All approaches show that the posterior becomes denser when conditioned on more data.}
    \label{fig:gaussian_location_posterior}
\end{figure}

\subsection{Gaussian mixture model}
The task is to infer the mean parameters of a two-dimensional multivariate Gaussian mixture model~\citep{Sisson_2007}
\begin{equation}
    p(\obs|\fxi) = \mathcal{N}(\obs|\fmu=\fxi, \fSigma=\fI) + \mathcal{N}(\obs|\fmu=\fxi, \fSigma=0.01 \fI)
\end{equation}
The two-dimensional observation space represents samples from the Gaussian mixture model. We assume a uniform prior $p(\fxi) = \mathcal{U}(-\mathbf{10}, \mathbf{10})$.
\subsection{Simple-likelihood complex-posterior}\label{appendix:experimental_details_slcp}
The \ac{slcp} task consists of a 5-dimensional parameter space $\fXi \in [-3, 3]^5$ with a uniform prior $\mathcal{U}(-\boldsymbol{3}, \boldsymbol{3})$. The ground-truth parameter, from which the reference observations are generated, is set to $\fxi^\mathrm{(gt)} = (0.7, 1.5, -1.0, -0.9, 0.6)^\intercal$. The observations represent four samples from a 2-dimensional Gaussian distribution
\begin{equation}
    \obs = [\obs_1, \obs_2, \obs_3, \obs_4,]^\intercal, \quad \obs_i = \mathcal{N}(\obs_i, \bm{\mu}(\fxi), \fSigma(\fxi)).
\end{equation}
For further information, we refer to the \ac{sbi} benchmarking paper from \citet{Lueckmann_2021}.

\subsection{SIR}
The SIR model is an epidemiological time-series model that models the spreading of a disease. The name derives from the three states, (i) susceptible, (ii) infectious, (iii) recovered, that an individual can be in. The parameters of the dynamics model are the contact rate $\beta$ and the mean recovery rate $\gamma$
\begin{equation}
    \fxi = \begin{bmatrix}
        \beta\\
        \gamma
    \end{bmatrix}\in\begin{bmatrix}
        (0, 2]\\
        (0, 0.5]
    \end{bmatrix}.
\end{equation}
The prior is a log-normal distribution over $\beta$ and $\gamma$
\begin{align}
    \beta&\sim \mathrm{LogNormal}(\log(0.4), 0.5)\\
    \gamma&\sim\mathrm{LogNormal}(\log(0.125), 0.2)
\end{align}
We rollout the dynamics over 160 timesteps and evaluate the simulation at 20 equidistant time-steps by taking a sample from the binomial, $x_i\sim\mathrm{Binom}(1000, I_i / N)$. Here $N$ denotes the total population at the start of the simulation.
See~\citet{Lueckmann_2021} for details on the dynamics of the SIR model.
\subsection{The Furuta pendulum}
This inverted double pendulum can be described by the angular deflection $[\theta_r, \theta_p]$ of the rods \wrt their equilibrium position $[0, 0]$.
The equations of motion can be derived by formulating the Euler-Lagrange equation~\citep{Muratore_2021_npdr}:
\begin{align}
    \centering
	\begin{bmatrix}
		-d_r \dot\theta_r + \tau\\
		-d_p \dot\theta_p
	\end{bmatrix} 
	&=
	\begin{bmatrix}
		\frac{1}{12} m_r l_r^2 + m_p l_r^2 + \frac{1}{4}m_p l_p^2 \sin^2 \theta_p & \frac{1}{2} m_p l_p l_r \cos \theta_p\\
		\frac{1}{2} m_p l_p l_r \cos\theta_p & \frac{1}{3} m_p l_p^2
	\end{bmatrix}
	\begin{bmatrix}
		\ddot\theta_r\\
		\ddot\theta_p
	\end{bmatrix}\nonumber\\
	&+
	\begin{bmatrix}
	\frac{1}{4} m_p l_p^2\sin 2\theta_p~\dot\theta_r\dot\theta_p - \frac{1}{2} m_p l_p l_r \sin \theta_p~\dot \theta_p^2\\
	-\frac{1}{8} m_p l_p^2 \sin 2\theta_p~\dot\theta_r^2 + \frac{1}{2}m_p l_p g \sin \theta_p
	\end{bmatrix}
	\label{eq_eom_furuta}.
\end{align}

Here, the mild assumption is made that the pole length is significantly greater than its diameter for which the moments of inertia of the poles around their pivot are $J_i = 1/3~m_i l_i^2,~i\in\{r,p\}$.
The mass matrix contains entries from the translatory and rotational movement of the two poles.
As the reference coordinate systems are constantly rotating \wrt the basis coordinate system, Coriolis forces occur.
They are complemented by gravitation which works on the rotational pole.
The left-hand side considers damping in the joints, represented by the damping coefficients $d_r$ and $d_p$, and the torque $\tau$ which is applied from a servo motor.
For this paper, we omit external forces, i.e., $\tau=0$\,Nm.
The Furuta pendulum is set into motion by perturbing the initial state around its unstable equilibrium.

For the system identification tasks we select the five system parameters 
\begin{equation*}
    \fxi = \begin{bmatrix}
        g\\
        l_r\\
        m_r\\
        l_p\\
        m_p
    \end{bmatrix} \in 
    \begin{bmatrix}
        [9,11]\\
        [0.08, 0.09]\\ 
        [0.08, 0.1]\\ 
        [0.12, 0.135]\\
        [0.02, 0.03]
    \end{bmatrix}; 
    \qquad \fxi^{\mathrm{gt}} = \begin{bmatrix}
            9.81\\
            0.085\\
            0.095\\
            0.129\\
            0.024
    \end{bmatrix}
\end{equation*}
with a uniform prior on the predefined ranges.

\begin{figure}[p]
    \centering
    \begin{subfigure}[]{0.49\textwidth}
        \includegraphics[width=\linewidth]{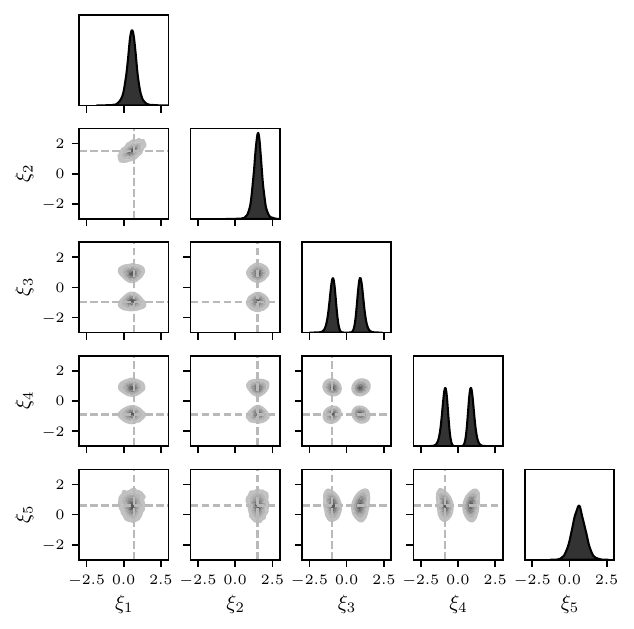}
    \end{subfigure}
    \hfill
    \begin{subfigure}[]{0.49\textwidth}
        \includegraphics[width=\linewidth]{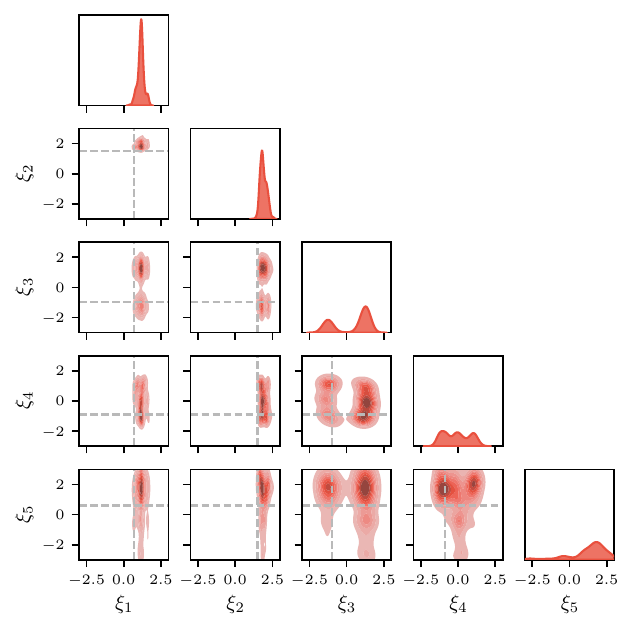}
    \end{subfigure}
    \begin{subfigure}[]{0.49\textwidth}
        \includegraphics[width=\linewidth]{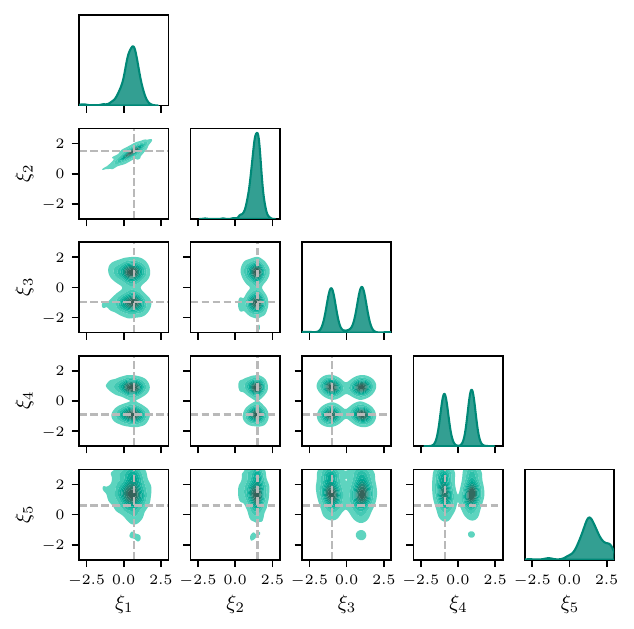}
    \end{subfigure}
    \hfill
    \begin{subfigure}[]{0.49\textwidth}
        \includegraphics[width=\linewidth]{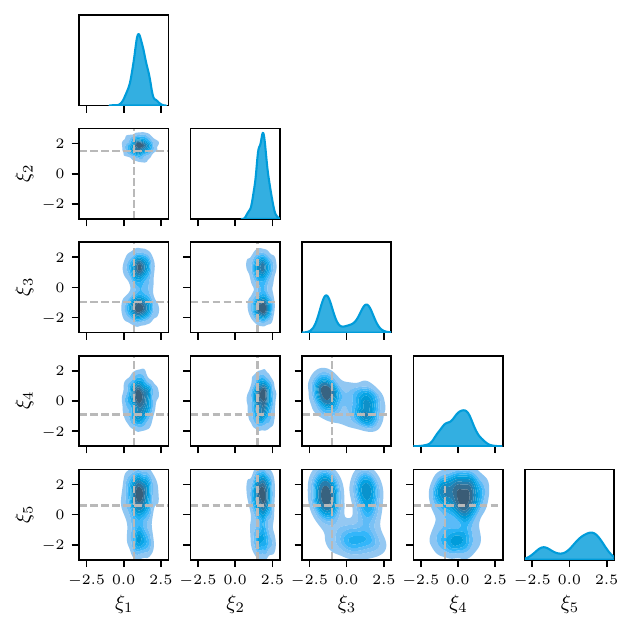}
    \end{subfigure}
    \caption{Results of posterior inference on the SLCP task with $N=2$ reference observations.
    The unimodal distribution of the parameters $\fxi_1$ and $\fxi_2)$ are depicted well by all approaches.
    On the contrary, the multi-modality is only represented properly by the \ac{apt} posterior (\refmarker, \plimarker, \aptmarker, \abcmarker).}
    \label{fig:slcp_posteriors_npp_2}
\end{figure}
\begin{figure}[p]
    \centering
    \begin{subfigure}[]{0.49\textwidth}
        \includegraphics[width=\linewidth]{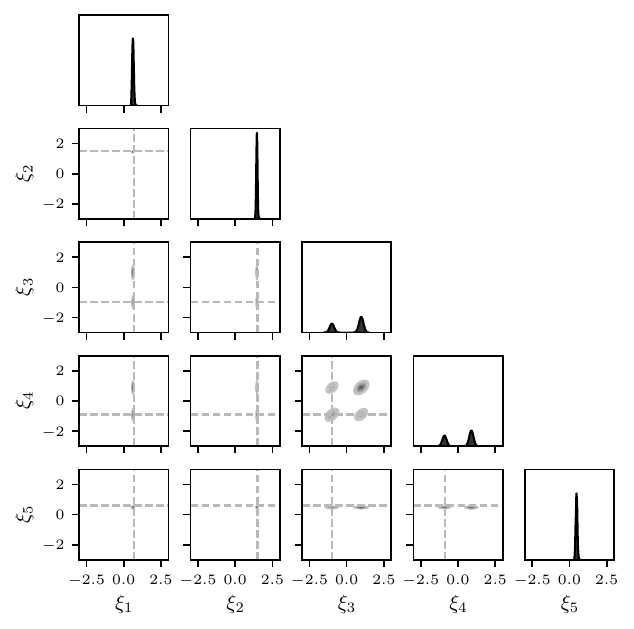}
    \end{subfigure}
    \hfill
    \begin{subfigure}[]{0.49\textwidth}
        \includegraphics[width=\linewidth]{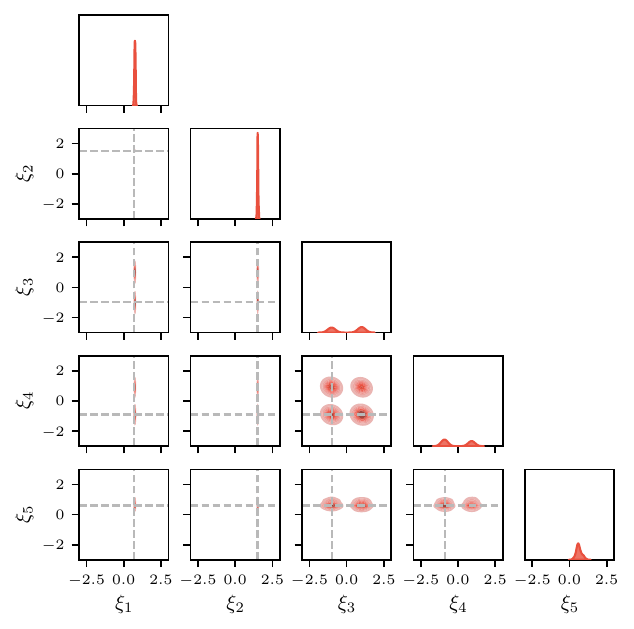}
    \end{subfigure}
    \begin{subfigure}[]{0.49\textwidth}
        \includegraphics[width=\linewidth]{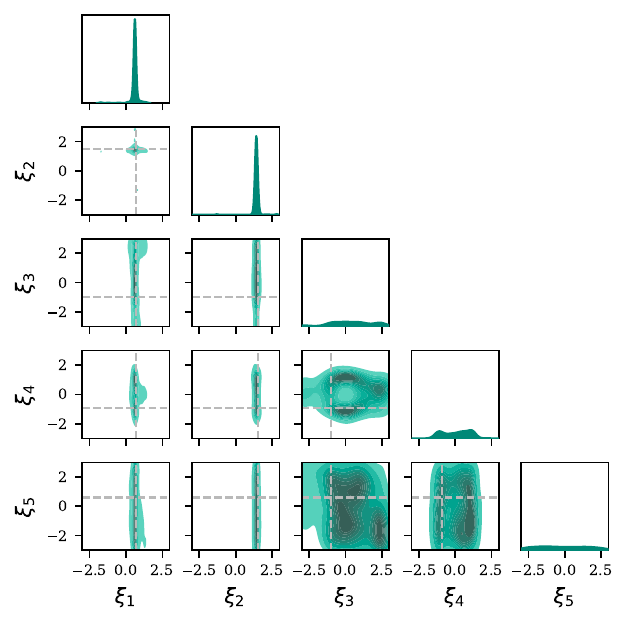}
    \end{subfigure}
    \hfill
    \begin{subfigure}[]{0.49\textwidth}
        \includegraphics[width=\linewidth]{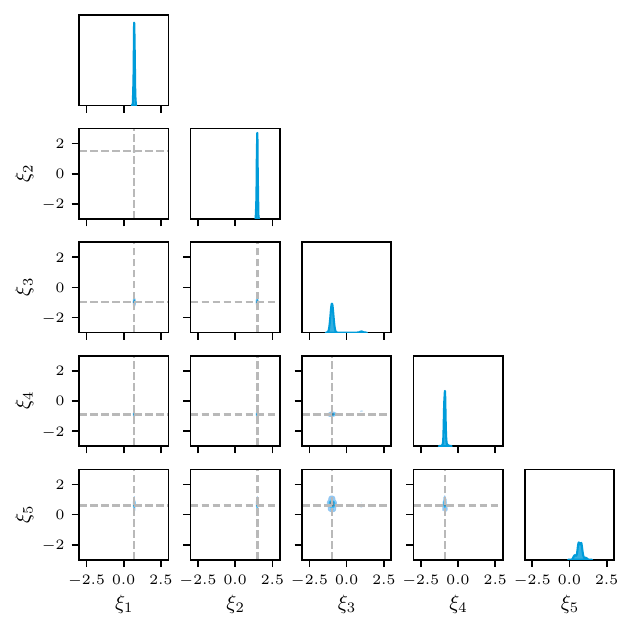}
    \end{subfigure}
    \caption{Results of posterior inference on the SLCP task with $N=100$ reference observations.
    Compared to the posterior given $N=2$ observations (Figure~\ref{fig:slcp_posteriors_npp_2}), the posterior (\refmarker) is distributed tightly around distinct points.
    Here, \plimarker~captures all modes of the posterior, \abcmarker~centers around a uni-mode, while \aptmarker~cannot represent the multi-modality.}
    \label{fig:slcp_posteriors_npp_100}
\end{figure}

\begin{figure}[!ht]
    \centering
    \begin{subfigure}[]{0.49\textwidth}
        \includegraphics[width=\linewidth]{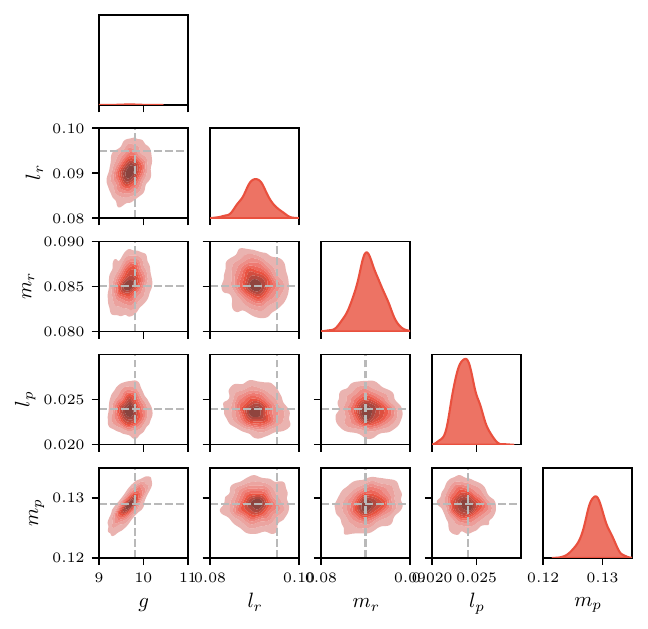}
    \end{subfigure}
    \begin{subfigure}[]{0.49\textwidth}
        \includegraphics[width=\linewidth]{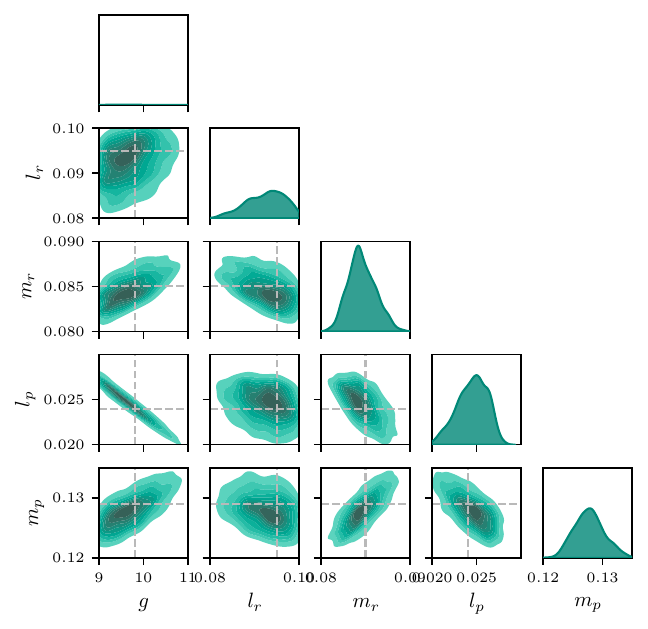}
    \end{subfigure}
    \begin{subfigure}[]{0.49\textwidth}
        \includegraphics[width=\linewidth]{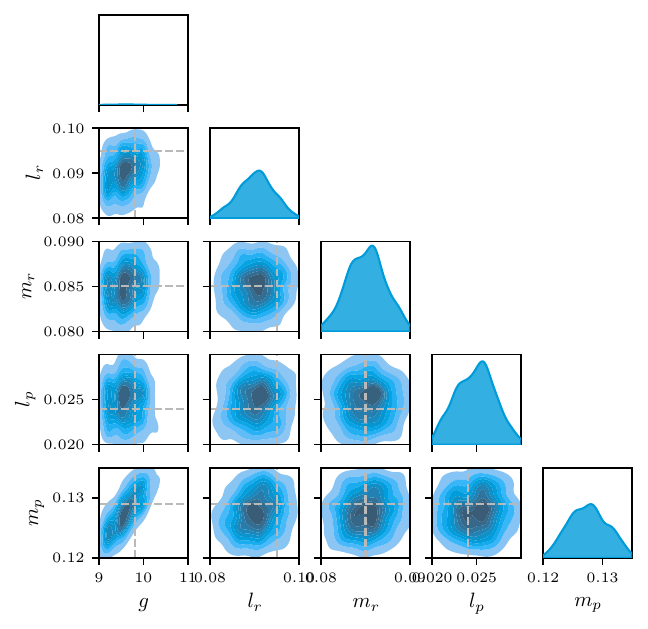}
    \end{subfigure}
    \caption{Results of posterior inference on the Furuta pendulum with $N=2$ reference observations.
    All methods center around the ground truth parameter.
    \aptmarker~finds the expected correlations among the parameters while \plimarker~and \abcmarker~remain more widespread.}
    \label{fig:furuta_posteriors_npp_2}
\end{figure}
\begin{figure}[p]
    \centering
    \begin{subfigure}[]{0.49\textwidth}
        \includegraphics[width=\linewidth]{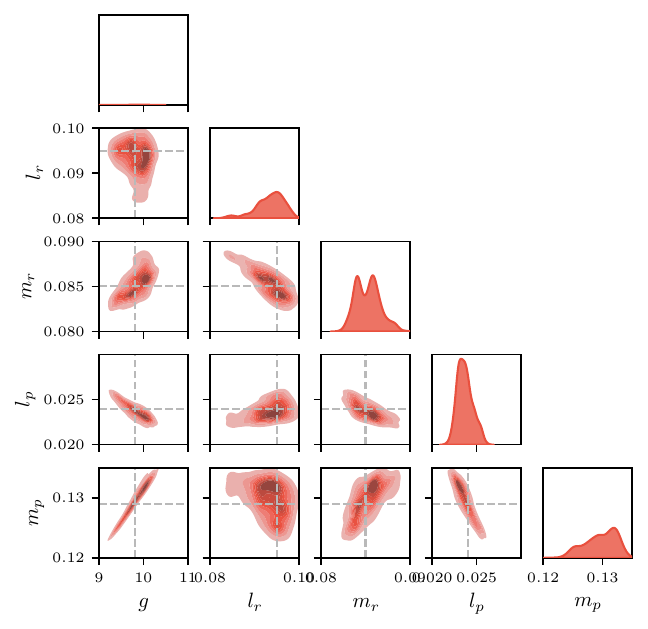}
    \end{subfigure}
    \hfill
    \begin{subfigure}[]{0.49\textwidth}
        \includegraphics[width=\linewidth]{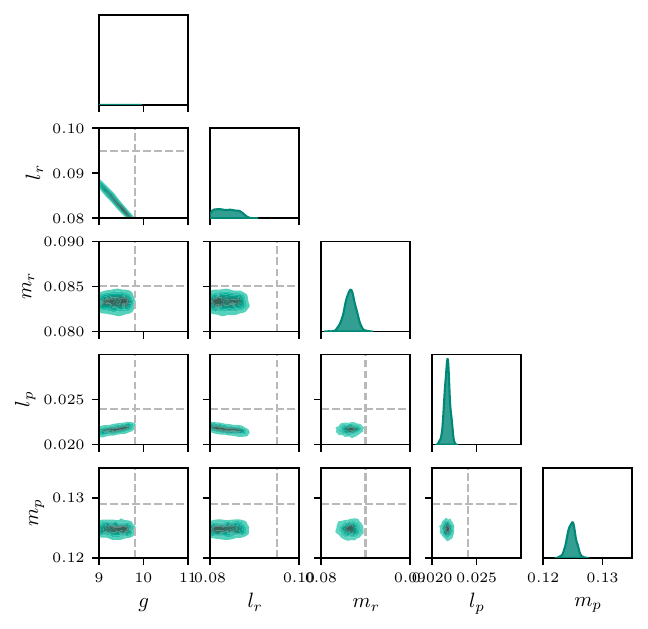}
    \end{subfigure}
    \begin{subfigure}[]{0.49\textwidth}
        \includegraphics[width=\linewidth]{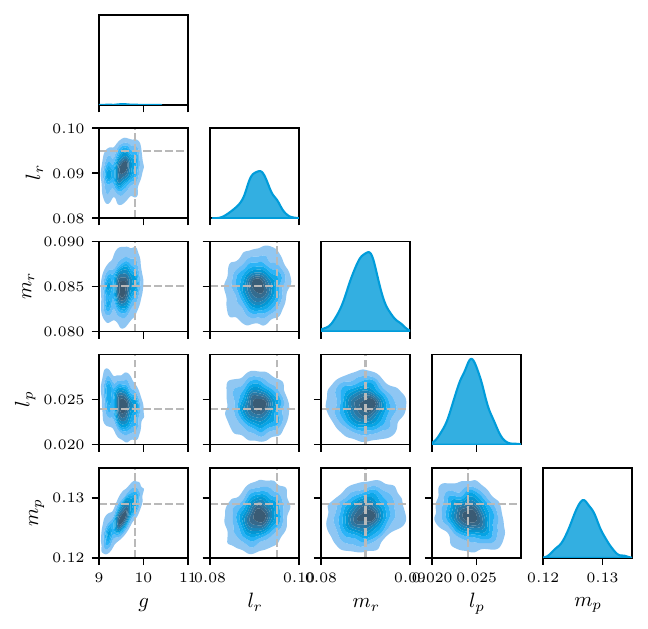}
    \end{subfigure}
    \caption{Results of posterior inference on the Furuta pendulum with $N=100$ reference observations.
    All models capture the ground-truth parameter well.
    In contrast to the $N=2$ setting (Figure~\ref{fig:furuta_posteriors_npp_2}) \plimarker~reveals pairwise correlations between the domain parameters, and \abcmarker~is less densely distributed.
    Note, that \aptmarker~clusters outside of the ground-truth parameter.}
    \label{fig:furuta_posteriors_npp_100}
\end{figure}

\end{document}